\definecolor{amber}{rgb}{1.0, 0.75, 0.0}
\definecolor{green2}{rgb}{0.0, 0.5, 0.0}
\definecolor{dkgreen}{rgb}{0,0.6,0}
\definecolor{gray}{rgb}{0.5,0.5,0.5}
\definecolor{mauve}{rgb}{0.58,0,0.82}
\definecolor{beige}{HTML}{DBD7A2}
\tiny\color{gray},
\crefname{section}{Sec.}{Secs.}
\Crefname{section}{Section}{Sections}
\Crefname{table}{Table}{Tables}
\crefname{table}{Tab.}{Tabs.}
\ificcvfinal\pagestyle{empty}\fi
\begin{document}

\title{Semantic-Aware Implicit Template Learning via Part Deformation Consistency}

\newcommand*\samethanks[1][\value{footnote}]{\footnotemark[#1]}
\author{
Sihyeon Kim\hspace{0.6cm}
Minseok Joo\hspace{0.6cm}
Jaewon Lee\hspace{0.6cm} \\
Juyeon Ko\hspace{0.6cm}
Juhan Cha\hspace{0.6cm}
Hyunwoo J. Kim\thanks{Corresponding author.}\vspace{0.3cm} \\
Department of Computer Science and Engineering, Korea University \\
{\tt\small \{sh\_bs15, wlgkcjf87, 2j1ejyu, juyon98, hanchaa, hyunwoojkim\}@korea.ac.kr}}

\maketitle
\ificcvfinal\thispagestyle{empty}\fi

\begin{abstract}
Learning implicit templates as neural fields has recently shown impressive performance in unsupervised shape correspondence.
Despite the success, we observe current approaches, which solely rely on geometric information, often learn suboptimal deformation across generic object shapes, which have high structural variability.
In this paper, we highlight the importance of part deformation consistency and propose a semantic-aware implicit template learning framework to enable semantically plausible deformation.
By leveraging semantic prior from a self-supervised feature extractor, we suggest local conditioning with novel semantic-aware deformation code and deformation consistency regularizations regarding part deformation, global deformation, and global scaling.
Our extensive experiments demonstrate the superiority of the proposed method over baselines in various tasks: keypoint transfer, part label transfer, and texture transfer.
More interestingly, our framework shows a larger performance gain under more challenging settings.
We also provide qualitative analyses to validate the effectiveness of semantic-aware deformation.
The code is available at https://github.com/mlvlab/PDC.
\end{abstract}

\newcommand{\argminU}{\mathop{\mathrm{argmin}}}
\newcommand{\pdcgeo}{\mathcal{L}_{\text{pdc}\_\text{geo}}}
\newcommand{\pdcsem}{\mathcal{L}_{\text{pdc}\_\text{sem}}}
\newcommand{\bx}{\boldsymbol{x}}
\newcommand{\by}{\boldsymbol{y}}
\newcommand{\bo}{\boldsymbol{o}}
\newcommand{\bp}{\boldsymbol{p}}
\newcommand{\bq}{\boldsymbol{q}}
\newcommand{\bz}{\boldsymbol{z}}
\newcommand{\Pprime}{P^{\prime}}
\newcommand{\Qprime}{Q^{\prime}}
\newcommand{\Pprimei}{\mathcal{P}^{\prime}_i}
\newcommand{\Qprimei}{\mathcal{Q}^{\prime}_i}
\newcommand{\Ppset}{\Omega_{\Qprime}}
\newcommand{\Qpset}{\Omega_{\Qprime}}
\newcommand{\Ppseti}{\Omega_{\Pprime_i}}
\newcommand{\Qpseti}{\Omega_{\Qprime_i}}
\newcommand{\lrec}{\mathcal{L}_{\text{rec}} }
\newcommand{\gdc}{\mathcal{L}_{\text{geo}}}
\newcommand{\NF}{\mathcal{F}_\theta}
\newcommand{\dfm}{\Delta \boldsymbol{x}}
\newcommand{\cor}{\Delta s}
\newcommand{\Dc}{\mathcal{D}_{\theta_1}}
\newcommand{\OPx}{\boldsymbol{o}^P(\boldsymbol{x})}
\newcommand{\OPix}{\boldsymbol{o}^P_i(\boldsymbol{x})}
\newcommand{\OPjx}{\boldsymbol{o}^P_j(\boldsymbol{x})}
\newcommand{\zp}{\boldsymbol{z}_P}
\newcommand{\xb}{\boldsymbol{x}}
\newcommand{\eb}{\boldsymbol{e}}
\newcommand{\alphax}{\boldsymbol{\alpha}(\boldsymbol{x})}
\newcommand{\alphaxo}{\boldsymbol{\alpha}(\boldsymbol{x};P)}
\newcommand{\alphaxq}{\boldsymbol{\alpha}(\boldsymbol{x};Q)}
\newcommand{\Ec}{\mathcal{E}}
\newcommand{\Lc}{\mathcal{L}}
\newcommand{\Rb}{\mathbb{R}}
\newtheorem{lemma}{Lemma}

\newcommand*\circled[1]{\tikz[baseline=(char.base)]{
            \node[shape=circle,draw,inner sep=1pt] (char) {#1};}}
\newcommand{\sk}[1]{{\color{red}#1}}
\newcommand{\hjk}[1]{{\color{brown}#1}}
\begin{figure}[t!] 
\centering
\includegraphics[width=0.48\textwidth]{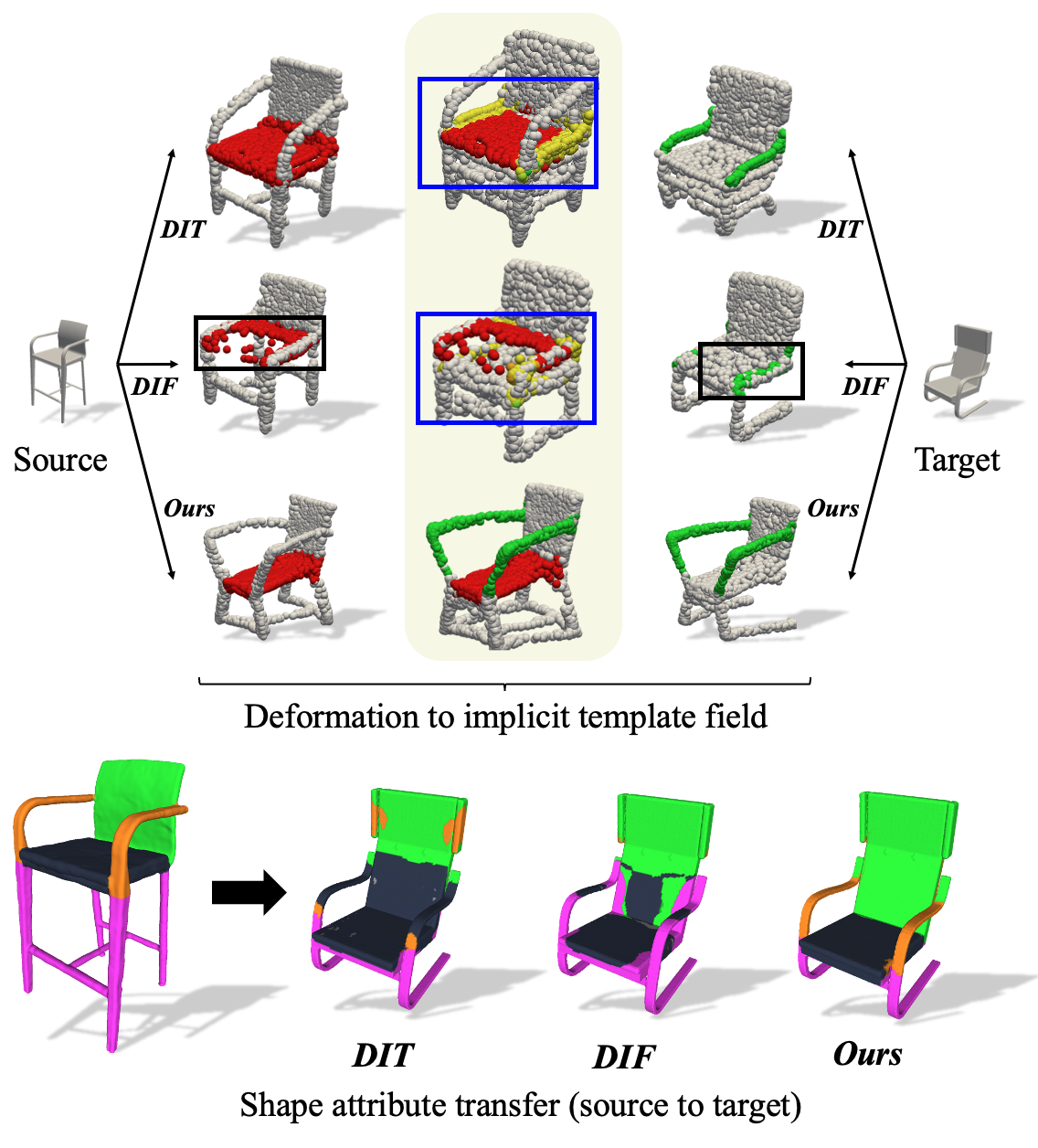}
\caption{ \textbf{Importance of part deformation consistency.} 
Given input shapes with high structural variability, only considering low-level geometry during template learning~\cite{zheng2020dit,deng2021deformed} can lead to suboptimal deformation (\eg, cannot distinguish between seat and arm where two parts are geometrically close while semantically distant: low seat-low arm, high seat-high arm). 
Ours succeeds in transferring shape attributes by encouraging part deformation consistency with semantic information.
}
\vspace{-16pt}
\label{fig:concept}
\end{figure}

\vspace{-15pt}
\section{Introduction}
Template learning is essential for shape analysis since the template, which is a compact representation of arbitrary shapes, provides a good prior for shape fitting so that we can establish dense correspondence and transfer key attributes such as texture or keypoints between shapes.
Traditionally, handcrafted templates such as meshes, part elements with primitive shapes ($\eg$, sphere, cylinder, cone), or parametric models have been widely used for human/animal body~\cite{DBLP:journals/tog/LoperM0PB15,allen2002articulated,3DCODED/eccv/GroueixFKRA18,joo2018total, DBLP:conf/cvpr/ZengOLLW20, lee2019dense,DBLP:conf/cvpr/ZuffiKJB17}, human face~\cite{paysan20093d, li2017learning}, human hand~\cite{ballan2012motion, sridhar2013interactive, oikonomidis2014evolutionary}, and generic objects~\cite{DBLP:conf/cvpr/TulsianiSGEM17, ganapathi2018parsing, groueix2018}. 
Defining templates for less complex shapes with consistent topologies and similar structures is relatively straightforward. 
However, for shapes like generic object shapes with high structural variability or non-rigid shapes with large-scale deformations, manually defining suitable templates becomes challenging.

To address this challenge, recent works~\cite{zheng2020dit,deng2021deformed,palafox2022spams,sundararaman2022implicit} have focused on learning implicit templates based on neural fields~\cite{xie2022neural}.
Neural fields, which are known for strong representation power, efficiency, and flexibility, have been successful in multiple 3D vision tasks, $\eg$, shape/scene reconstruction~\cite{chen2018implicit_decoder,mescheder2019occupancy,park2019deepsdf,peng2020convolutional}, neural rendering~\cite{mildenhall2020nerf,martin2021nerf,park2021nerfies}, and human digitization~\cite{saito2019pifu,yenamandra2021i3dmm,saito2020pifuhd}.
Likewise, neural field-based template learning has shown superior performance on dense correspondence between complex shapes.
Both~\cite{zheng2020dit} and~\cite{deng2021deformed} suggest decomposing the implicit representation of shapes into a deformation field and a template field. 
Here, the template is learned as a continuous function, which is expected to capture shared structures among the given shapes within a category ($\eg$, airplane, chair), while the deformation depends solely on the conditioned geometry of each shape ($\ie$, SDF value and surface normal).
We observe current approaches often learn \textit{suboptimal deformation} especially when a shape category has high structural variability like diverse local deformation scales and unseen/missing part structures; see Figure~\ref{fig:concept}.
That is, incorporating extra knowledge of part semantics is necessary to handle such cases.

In this paper, we propose a novel framework to learn semantically plausible deformation by distilling the knowledge of self-supervised feature extractor~\cite{chen2019bae_net} and employing it as a semantic prior to satisfy \textit{part deformation consistency} during the template learning process.
Part deformation consistency is our inductive bias that the deformation result within the same part should be consistent.
By leveraging semantic prior to understanding part arrangements of shapes, we suggest local conditioning with semantic-aware deformation code and carefully designed regularizations to encourage semantically corresponding deformation.
Semantic-aware deformation code is a point-wise soft assignment of part deformation priors, where part deformation priors are multiple latent codes to encode information on corresponding part deformation. 
We calculate assigned weights considering the part semantics of each point.
In addition, we propose input space regularization and latent space regularization to encourage part-level distances to be close, providing an explicit way to control the deformation field so that the model can learn flexible deformation and successfully learn common structures under diverse shapes.
Lastly, we suggest global scale consistency regularization to preserve the volume of the template against large-scale deformations.
To validate the effectiveness of our framework, we conduct extensive surrogate experiments reflecting different levels of correspondence such as keypoint transfer, part label transfer, and texture transfer, and demonstrate competitive performance over baselines with qualitative analyses.

Our \textbf{contributions} are summarized as follows: \textcircled{\raisebox{-0.9pt}{1}} We propose a new framework that learns global implicit template fields founded on part deformation consistency while understanding part semantics to enable semantically plausible deformation, \textcircled{\raisebox{-0.9pt}{2}} We impose hybrid conditioning by combining global conditioning of geometry and local conditioning of part semantics for flexible deformation, \textcircled{\raisebox{-0.9pt}{3}} We design novel regularizations to successfully manipulate the template fields, \textcircled{\raisebox{-0.9pt}{4}} We show the effectiveness of our framework with both qualitative and quantitative experiments.

\section{Related Works}

\noindent\textbf{Template learning for shape correspondence.}
Defining a template for a set of arbitrary shapes has been studied for a long time.
A high-quality template provides a good initialization for shape alignment and various applications in vision and graphics.
For shapes with consistent topologies and structures such as faces or bodies of humans and animals, handcrafted templates have been popular~\cite{DBLP:journals/tog/LoperM0PB15,3DCODED/eccv/GroueixFKRA18,DBLP:conf/cvpr/ZengOLLW20,lee2019dense,DBLP:conf/cvpr/ZuffiKJB17} and still widely used today.
On the other hand, it was more challenging for generic object shapes with inconsistent topology and diverse part structures.
Prior works suggest to predefine or learn local templates from data as in primitive parts~\cite{DBLP:conf/cvpr/TulsianiSGEM17} or part elements~\cite{kim2013learning,ganapathi2018parsing,sif/iccv/GenovaCVSFF19,groueix2018,atlasnetv2deprelle2019learning} to handle the inconsistent structure.
Recent works have tried to learn a canonical shape of generic objects through implicit template field~\cite{zheng2020dit,deng2021deformed,lei2022cadex,hui2022neural}, which are proven to be effective for representing shapes.
However, existing approaches mostly lack explicit consideration of semantic knowledge, which enables semantically plausible deformation so that the learned template captures a common structure even with the various structured shapes.

\noindent\textbf{Neural fields for 3D representations.}
Recent studies have shown that representing 3D shapes or scenes as implicit representations with a neural network has several advantages: powerful representation, flexibility, and memory efficiency.
These are called neural fields~\cite{xie2022neural}.
For instance, Occupancy networks~\cite{mescheder2019occupancy} proposed a method to represent 3D geometry as a continuous occupancy function. 
DeepSDF~\cite{park2019deepsdf} learns 3D complex surfaces with a neural network of auto-decoder framework for 3D modeling. 
Further, implicit representation has been introduced to various applications~\cite{hao2020dualsdf,DBLP:conf/eccv/Curriculum}.
To leverage the advantages, many applications based on neural fields have emerged such as neural rendering~\cite{mildenhall2020nerf,martin2021nerf,park2021nerfies}, human digitization~\cite{saito2019pifu,yenamandra2021i3dmm,saito2020pifuhd}, and generative modeling~\cite{DBLP:conf/iclr/GuL0T22,niemeyer2021giraffe,niemeyer2021campari,bautista2022gaudi}.
They have also been applied to learn the template of complex shapes as mentioned above.

\noindent\textbf{Utilizing self-supervised segmentation models.}
Recently, there has been a growing body of research on analyzing complex shapes in a self-supervised manner due to the high cost of annotation for 3D data.
Unsupervised semantic segmentation/co-segmentation models like BAE-Net~\cite{chen2019bae_net}, RIM-Net~\cite{niu2022rim}, AdaCoSeg~\cite{zhu2020adacoseg}, and PartSLIP~\cite{liu2023partslip} have demonstrated the ability to comprehend part semantics without ground truth labels.
For 2D and multi-modal tasks, using large-scale self-supervised segmentation models like DINO~\cite{caron2021emerging} as a feature extractor has been widely adopted, including pixelNeRF~\cite{yu2021pixelnerf}, Feature Field~\cite{kobayashi2022decomposing}, and ClipNeRF~\cite{wang2022clip}.
Inspired by these recent works, we utilize pretrained unsupervised part segmentation models to harness part semantic information for template learning.

\begin{figure*}[t!] 
\centering
\includegraphics[width=1.0\textwidth]{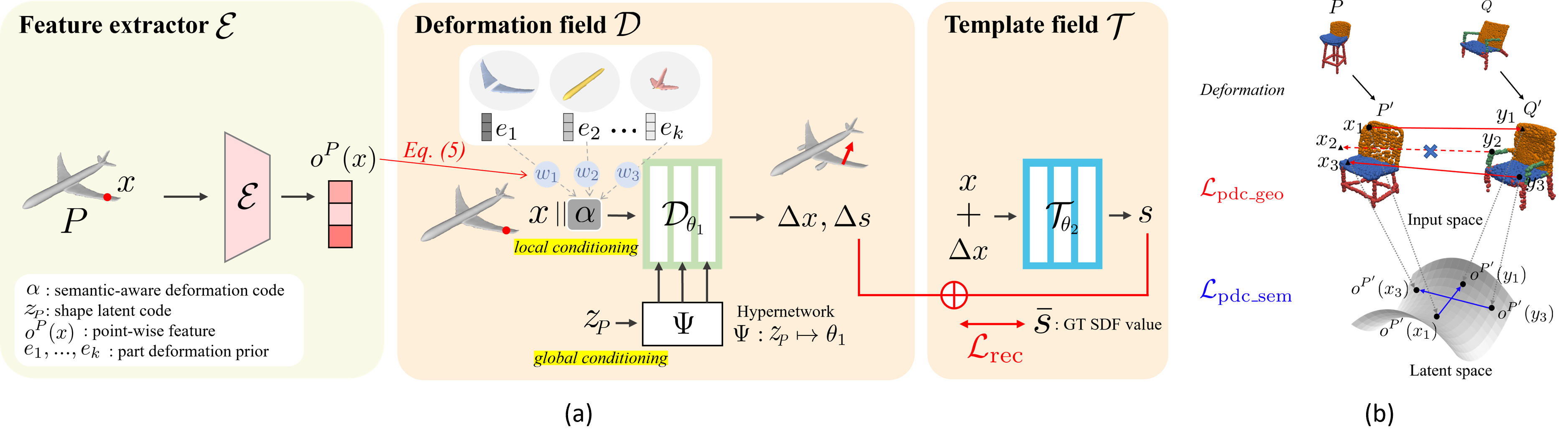}
\caption{\small \textbf{Framework of semantic-aware implicit template learning.} To learn semantic-aware implicit template field, (a) we extract part semantics from pretrained feature extractor $E$ and incorporate knowledge into semantic-aware deformation code (${\boldsymbol{\alpha}}$) and (b) part deformation consistency regularizations. With optimizing with the reconstruction loss $\mathcal{L}_{\text{rec}}$ and several regularizers, our framework succeeds to learn semantically plausible deformation.
\label{fig:model}
}
\vspace{-10pt}
\end{figure*}
\section{Preliminaries}\label{sec:3}

Implicit templates have been studied with \textit{neural fields}.
Given a 3D shape $P$, 
a neural field, which is a neural network $\NF$ parameterized by $\theta$, approximates 
continuous functions such as an occupancy function~\cite{chen2018implicit_decoder,mescheder2019occupancy} or a Signed Distance Function (SDF)~\cite{park2019deepsdf} as 
\begin{equation}
\NF (\boldsymbol{x};\boldsymbol{z}_P) = s.
\label{eq:deepsdf}
\end{equation}
Herein, $\boldsymbol{x} \in \mathbb{R}^{3}$ is a query point and $s$ is an occupancy/SDF function value. 
To represent multiple 3D shapes, for each shape, a shape latent code $\boldsymbol{z}_P\in \mathbb{R}^{d}$ is learned.
In general, a surface is defined as the zero-level set of a SDF, \ie, $\{ x| \NF (\boldsymbol{x};\boldsymbol{z}_P) = 0 \}$
as in DeepSDF~\cite{park2019deepsdf}.

The neural field can be further decomposed into an \textit{implicit template field} $\mathcal{T}$ and a deformation field $\mathcal{D}$~\cite{zheng2020dit, deng2021deformed}:
\begin{equation}
\NF(\bx;\boldsymbol{z}_P) = \mathcal{T}_{\theta_2}(\mathcal{D}_{\theta_1}(\boldsymbol{x};\boldsymbol{z}_P)).
\label{eq:git}
\end{equation}

The implicit template field $\mathcal{T}_{\theta_2}$ captures a common structure and outputs an occupancy/SDF value $s$ in the template space. 
$\mathcal{D}_{\theta_1}$ maps a point $x$ from a shape-specific space to the common template space by adding a deformation vector $\dfm \in \mathbb{R}^{3}$, \ie, $\bx + \dfm$.
In DIF-Net~\cite{deng2021deformed}, $\mathcal{D}_{\theta_1}$ additionally predicts a correction value $\cor \in \mathbb{R}$ to capture instance-specific local structures. 
In this case, the neural field $\NF$ is defined as 
\begin{equation}
\begin{split}
\NF(\bx;\boldsymbol{z}_P) &= \mathcal{T}_{\theta_2}(\boldsymbol{x}+\dfm)+\cor\\
\cor, \dfm &=\mathcal{D}_{\theta_1}(\boldsymbol{x};\boldsymbol{z}_P). 
\label{eq:DIFnet}
\end{split}
\end{equation}

The existing approaches~\cite{zheng2020dit, deng2021deformed}
learn the implicit template field by minimizing the distance between the template and the deformed individual shapes.
The distance is measured by SDF values and surface normals. 
Also, several regularizers are imposed such as deformation smoothness ($\eg$, penalizing the spatial gradient of $\mathcal{D}_{\theta_1}$ and restricting the deformation scale between points). 
However, the loss functions and regularizers are computed by \textit{low-level geometric} information. 
They are not effective enough to handle generic objects with large structural variability.
Hence, in this paper, we take \textit{part semantics} into account for template learning to achieve `semantically plausible' deformation and dense correspondence.

\section{Method}\label{section:method}
We present a semantic-aware implicit template learning framework based on part deformation consistency. 
In this section, we first delineate the overall architecture of our method equipped with part deformation priors and semantic-aware deformation code.
Then, we introduce our deformation consistency regularizers including part deformation consistency regularizations.
Lastly, we discuss the training procedure with several additional loss functions.

\subsection{Semantic-aware implicit template field}\label{sec:4.1}
We here present a novel semantic-aware implicit template learning framework. 
The overall pipeline of the proposed method is shown in Figure~\ref{fig:model}(a). 

\noindent\textbf{Template field and deformation field.}
We extend the implicit template learning by incorporating semantic information into the deformation field. 
Specifically, given a query point $\xb$ of a shape $P$, the deformation field $\Dc$ predicts a deformation vector $\dfm$
and a correction value $\cor$ conditioned on a point-wise \textit{semantic-aware deformation code} (SDC) $\alphaxo$ in addition to a shape latent code $\zp$. 
Then, our framework is defined as
\begin{equation}
\label{eq:ours2}
\begin{split} 
    \NF(\boldsymbol{x}; P, \mathcal{E}) &:= \mathcal{T}_{\theta_2}(\bx+\dfm)+\Delta s,\\
    \dfm, \Delta s  &= \mathcal{D}_{\theta_1}(\boldsymbol{x};\boldsymbol{z}_P, \alphaxo),
\end{split}
\end{equation}
where $\Ec$ is a pretrained feature extractor. 
We assign $\alphaxo$ based on a point-wise semantic feature, which is extracted from $\Ec$.
For simplicity, we denote the neural field $\NF(\boldsymbol{x}; P, \mathcal{E})$ by $\NF(\xb)$ and SDC $\alphaxo$ by $\alphax$.

\noindent\textbf{Feature extractor.}
In this paper, for feature extraction, we mainly use BAE-Net~\cite{chen2019bae_net} as an encoder $\Ec$, which is proven effective in capturing part semantics. 
Note that we pretrained BAE-Net on selected categories of ShapeNetV2~\cite{chang2015shapenet} by self-supervision without any extra labels or data for a fair comparison. 
The features from BAE-Net are $k$ dimensional vectors, \ie, $\boldsymbol{o}^P(\boldsymbol{x}) = \mathcal{E}(\bx, P)\in \mathbb{R}^k$.  
$k$ is a hyperparameter, which is closely related to the number of semantic parts. 
Since the number of semantic parts is unknown in self-supervised learning, we simply chose sufficiently large $k$.

\noindent\textbf{Semantic-aware deformation code.}
Our framework assumes that different semantic parts necessitate different deformations.
To achieve this, the proposed method learns a deformation prior for each semantic part. 
The deformation priors are encoded as latent codes $\boldsymbol{e}_1,\cdots,\boldsymbol{e}_k \in \mathbb{R}^{d^\prime}$. 
A point $\xb$ is softly assigned to $k$ semantic parts using the softmax values of semantic features $\OPx$. 
Combining the soft assignment with deformation priors, 
\textit{semantic-aware deformation code} (SDC) denoted as $\alphax \in \Rb^{d'}$ is defined as 
\begin{equation}
\alphax = \sum_{i=1}^k \frac{\text{exp}(\OPix)\eb_i}{\sum^k_{j=1} \text{exp}(\OPjx)} , \forall i \in \{1, \ldots, k\}, 
\end{equation}
where $\OPix$ denotes the $i$-th dimension value of the semantic features.
The SDC at $\xb$ is computed by a linear combination of deformation priors.
We learn the point's probability of belonging to each of the $k$ parts, which is served as part semantics.

\textit{Remarks.}
For comparison, we also applied hard assignment of part deformation priors for SDC, $\ie$, $\boldsymbol{\alpha}(\boldsymbol{x}) = \boldsymbol{e}_{i^*}$, where $i^* = {\mathrm{argmax}_i}(\boldsymbol{o}^P_i(\boldsymbol{x}))$.
However, as points are completely separated with local conditioning of the corresponding part deformation prior $\boldsymbol{e}_{i^*}$, the final deformation result tends to break the shape geometry (see Figure~\ref{fig:assignment}), leading to performance degradation as shown in Table~\ref{abl_assign}.
Thereby, we choose the soft assignment to enable flexible deformation while preserving the shape geometry. 

We train the deformation field $\mathcal{D}_{\theta_1}$ and $\mathcal{T}_{\theta_2}$ in an end-to-end fashion.
In our experiments, $\mathcal{D}_{\theta_1}$ is a 6-layer MLP conditioned on both \textit{global geometry} $\boldsymbol{z}_P$ via hypernetwork $\Psi$~\cite{hypernet/iclr/HaDL17,sitzmann2019siren} and \textit{local part semantics} $\boldsymbol{\alpha}(\boldsymbol{x})$ via concatenation with input coordinates.
The shape latent code $\boldsymbol{z}_P$ and part deformation priors $\{\boldsymbol{e}_i\}$ are randomly initialized and jointly optimized with neural fields.
$\mathcal{T}_{\theta_2}$ is a 5-layer MLP.

\begin{table}[ht!] 
  \centering
  \caption{\textbf{Ablations of different assignment strategies for SDC.} We conduct a quantitative comparison between hard and soft assignments with 5-shot part label transfer on ShapeNet chair.}
  \label{abl_assign}
\setlength{\tabcolsep}{7pt}
\renewcommand{\arraystretch}{1.}
\small
  \begin{tabular}{c|cc}
    \toprule
\multirow{1}{*}{\begin{tabular}[c]{@{}c@{}}Assignment strategies\end{tabular}} & \multirow{1}{*}{Hard} & \multicolumn{1}{c}{Soft}               \\ 
     \midrule
     chair (5-shot PLT)&77.0&\textbf{84.2}\\
    \bottomrule
  \end{tabular}
\end{table} 

\begin{figure}[t] 
\centering
\includegraphics[width=0.4\textwidth]{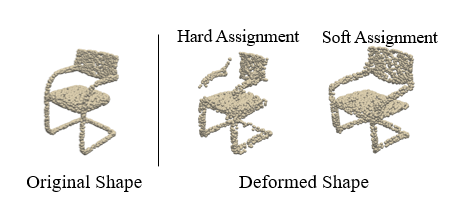}
\caption{\textbf{Effects of different assignment strategies for SDC.} Hard assignment breaks the shape geometry, while soft assignment preserves the geometry while learning flexible deformations.}
\vspace{-10pt}
\label{fig:assignment}
\end{figure}

\subsection{{Part deformation consistency regularizations}}\label{sec:4.2}
In addition to conditioning neural fields, 
leveraging semantic information, we propose novel part deformation consistency regularizations to encourage semantically plausible deformation.
Semantically similar points should be close after deformation in both an input space and a latent space. 
That is, the deformation of points from the same part should be consistent.
Figure~\ref{fig:model}(b) illustrates the concept of our regularizations.
Given a pair of deformed shapes $\Pprime,\Qprime$, part deformation consistency regularizations are defined as
\begin{align}
    \label{eq:pdc_geo}
    \nonumber \pdcgeo =& \sum^k_{i=1}\left(\frac{1}{\left|\Pprimei\right|} \sum_{\bx \in \Pprimei} \|\bx-C(\bx,\Qprimei)\|_{2}^{2} \right.\\ 
    +&\left. \frac{1}{\left|\Qprimei\right|} \sum_{\by \in \Qprimei} \|\by - C(\by,\Pprimei)\|_{2}^{2} \right), \text{ and }
\end{align}
\begin{align}  
    \label{eq:pdc_sem}
    \nonumber \pdcsem  =& \sum^k_{i=1}\left(\frac{1}{\left|\Pprimei\right|} \sum_{\bx \in \Pprimei} \|\bo^{\Pprime}(\bx) - \bo^{\Qprime}(C(\bx,\Qprimei))\|_{2}^{2} \right. \\ 
    + & \left. \frac{1}{\left|\Qprimei\right|} \sum_{\by \in \Qprimei} \|\bo^{\Qprime}(\by) - \bo^{\Pprime}(C(\by,\Pprimei))\|_{2}^{2}\right), 
\end{align}

\begin{figure*}[t!]
\centering
            \includegraphics[width=1.\textwidth]{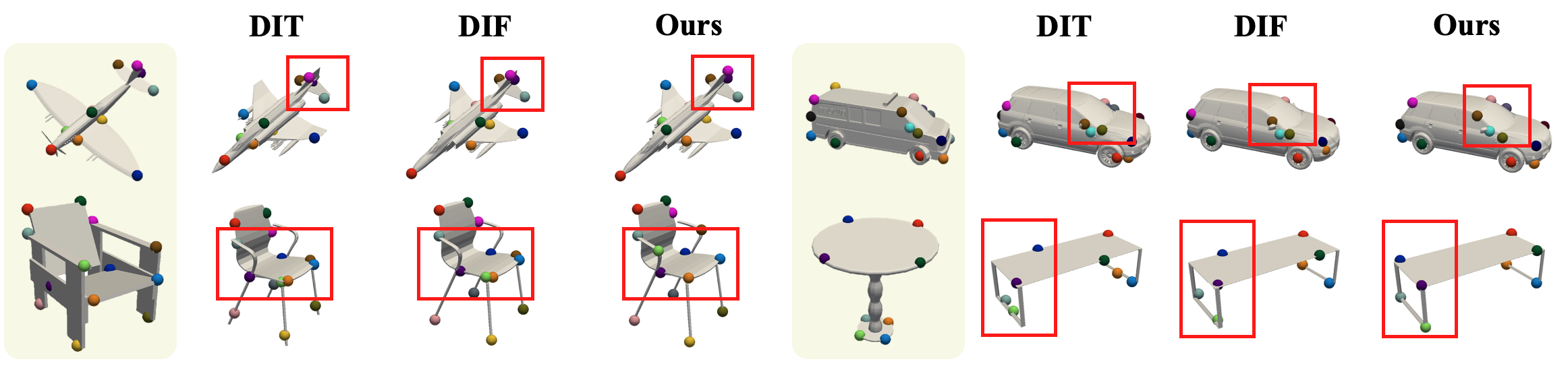}
        \caption
         {\textbf{Keypoint transfer result on ShapeNet.} We transfer keypoint labels from source shapes (colored in ivory) to target shapes and compare ours with DIT~\cite{zheng2020dit} and DIF~\cite{deng2021deformed}. For additional visualizations including results on ScanObjectNN~\cite{uy-scanobjectnn-iccv19}, see the supplement.}
\label{fig:kp}
\vspace{7pt}
\end{figure*}

\begin{figure}[t!] 

\centering
\includegraphics[width=0.45\textwidth]{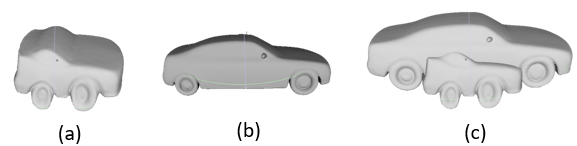}
\caption{\textbf{Efficacy of global scaling consistency $\mathcal{L}_{\text{scale}}$.}}
\label{fig:scvs}
\vspace{-10pt}
\end{figure}

\noindent where $C$ is a correspondence matching function for a point $\bx$ and its nearest point in a set $\mathcal{Q}$, \textit{i.e.}, $C(\bx,\mathcal{Q}) = \underset{\by\in \mathcal{Q}}{\mathrm{argmin}}\|\bx-\by\|^2_2$.
$\Pprimei$ and $\Qprimei$ denote sampled point sets of $i^{\text{th}}$ deformed part shapes of $\Pprime$ and $\Qprime$.  
They are defined as $\Pprimei = \{ \boldsymbol{x}+\mathcal{D}_{\theta_1}(\boldsymbol{x}; \boldsymbol{z}_P,\alphaxo) \, |\, i\!\!=\!\!\underset{k}{\mathrm{argmax}}(\boldsymbol{o}^{P}_{k}(\boldsymbol{x})), \boldsymbol{x} \!\in\!\Omega_P\}$ and $\Qprimei = \{ \boldsymbol{x}+\mathcal{D}_{\theta_1}(\boldsymbol{x}; \boldsymbol{z}_Q,\alphaxq) \, |\, i\!\!=\!\!\underset{k}{\mathrm{argmax}}(\boldsymbol{o}^{Q}_{k}(\boldsymbol{x})), \boldsymbol{x} \!\in\!\Omega_Q\}$, where $\Omega_P$ and $\Omega_Q$ are sampled query point sets for shape $P$ and $Q$.
Note that each deformed part shape is extracted based on part semantics from \textit{original} shapes $P$ and $Q$.
Here, we explicitly guide the part-level deformation among various shapes to be consistent.
Moreover, since we understand the part arrangement and only penalize matched parts, our method does not forcibly deform “unseen/missing match” cases.

We partly observed spatial distortion when part deformation consistency was too strongly imposed.
This is because using external semantics for deformation could break the continuity of geometric fields.
In addition, we propose \textit{global scale consistency} and \textit{global deformation consistency} to address the concern.
For the former one, we disentangle a scaling operation from the entire non-rigid deformation $\Dc$ to calculate the global scale of the learned implicit template field.
That is, we want to approximate $\mathcal{D}$ with the global scaling of the scaling factor of $r$, $\ie$, $\mathcal{D} \approx rI$.
Then the problem becomes finding $r$ from
\begin{equation}
 \min_{r} \sum\limits_{s} \sum\limits_{i} \|\bx_i^s +\dfm_i^s - r\bx_i^s\|_2^2,
\label{eq:scale}
\end{equation} 
where $s$ and $i$ are an index for shape and point, respectively.
A closed-form solution of \eqref{eq:scale} is $r=\sum\limits_{s} \sum\limits_{i}\frac{\bx_i^{s^\top} (\bx_i^s +\dfm_i^s)}{\bx_i^{s^\top} \bx_i^s}$.
See the supplement for proof.
Since we want the learned $\mathcal{D}$ to be insensitive to global scale operation, $\ie$, the implicit template to keep its scale by the average scale of input shapes, we penalize the expected $\mathbb{E}[r]$ over input shapes $P_1\cdots P_N$ in a single batch as:
\begin{equation}
 \mathcal{L}_{\text{scale}}= \big| \mathbb{E}[r] -1 \big|
\label{eq:sclreg}
\end{equation} 
\Cref{fig:scvs} illuminates the effect of $\mathcal{L}_{\text{scale}}$. 
For the same input shapes, \Cref{fig:scvs}(a) is a car template learned without $\mathcal{L}_{\text{scale}}$, \Cref{fig:scvs}(b) is a car template learned with $\mathcal{L}_{\text{scale}}$, and \Cref{fig:scvs}(c) is the result of placing two shapes in the same space.
It shows $\mathcal{L}_{\text{scale}}$ helps the model to learn a more desirable template by penalizing the change in the global scale of the learned template fields.

For the latter one, we add global deformation consistency for a pair of deformed shapes $\Pprime,\Qprime$:
\begin{equation}
    \label{eq:gdc}
    \gdc = \textbf{CD}(\Pprime,\Qprime),
\end{equation}
where $\textbf{CD}$ denotes a conventional Chamfer loss.
Thus, the deformation consistency regularization can be written as:
\begin{equation}
    \label{eq:dc}
    \mathcal{L}_{\text{dc}} =  \gamma_1 \pdcgeo+\gamma_2 \pdcsem+\gamma_3 \mathcal{L}_{\text{scale}}+\gamma_4 \Lc_{\text{geo}}.
\end{equation}

\begin{figure*}[t!]
\centering
            \includegraphics[width=0.98\textwidth]{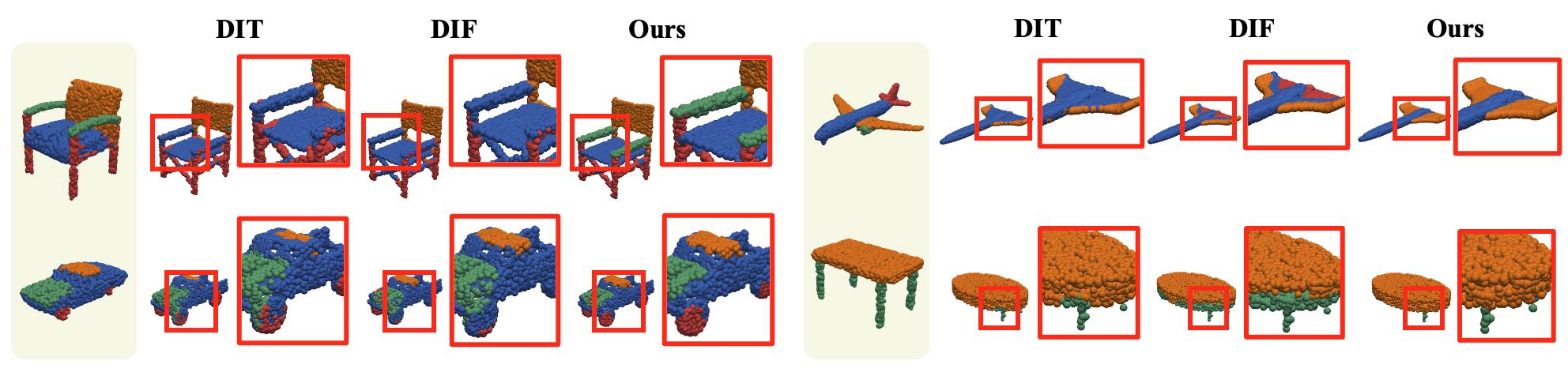}   
        \caption
         {\textbf{Part label transfer result on ShapeNet.} We transfer part segment labels from source shapes (colored in ivory) to target shapes and compare ours with DIT~\cite{zheng2020dit} and DIF~\cite{deng2021deformed}. For additional visualizations, please refer to the supplement.}
\label{fig:ptl}
\vspace{-5pt}
\end{figure*}

\subsection{Optimization}\label{sec:4.3}
We train our deformation and template fields with a reconstruction loss and several regularizers including our deformation consistency regularization.
Following~\cite{deng2021deformed}, the reconstruction loss $\lrec$ for shape $P$ is
\begin{equation}
\label{eq:recon}
\scalebox{0.85}{
$\begin{aligned}    
    \mathcal{L}_{\text{rec}}&= 
    \sum\limits_{\bx\in\Omega_P}\big| \NF(\bx) - \bar{s}\big| + 
    \sum\limits_{\bx\in P}(1-\langle\nabla\NF(\bx),\bar{n}\rangle)\\ 
    &+\sum\limits_{\bx\in\Omega_P} \big|\|\nabla\NF(\bx)\|_2 - 1\big|  + \sum\limits_{\bx\in\Omega_P\backslash P}
    {\rm exp}(-\delta \cdot \big|\NF(\bx)\big|),
\end{aligned}$}
\end{equation}  
where $\bar{s}$ denotes the ground-truth SDF value of each query point, $\bar{n}$ is the ground truth surface normal vector of each surface point and $\nabla \mathcal{F}_\theta$ the spatial derivative of a 3D field.
The second term of ~\eqref{eq:recon} is derived from the property of the SDF function that the gradient of an SDF is identical to the surface normal for surface points and the third is implicit geometric regularization~\cite{DBLP:conf/icml/IGR} derived from the Eikonal equation. 
The last term pertains to penalizing off-surface points to be away from the surface points, which is controlled by $\delta \gg 1$.

Since we do not know ground truth mapping between the template and input shapes, we manipulate the deformation field through multiple regularizers. 
First, we employ our proposed deformation consistency regularization~\eqref{eq:dc}.
For further regularizers, we adopt~\cite{deng2021deformed} such as deformation smoothness $\mathcal{L}_{\text{smooth}}$, normal consistency $\mathcal{L}_{\text {normal}}$ and minimal correction $\mathcal{L}_{\text{c}}$, which are defined as
\begin{equation}
    \mathcal{L}_{\text{smooth}} =  \sum\limits_{\bx\in\Omega_P}\sum_{d\in\{X,Y,Z\}}\|\nabla \mathcal{D}_{\theta_1}^v(\,\cdot,\boldsymbol{z}_P)|_d(\bx)\|^2_2, \label{eq:smooth}
\end{equation}
\begin{equation}
\mathcal{L}_{\text {normal}}=\sum_i \sum_{ \bx\in \mathcal{S}_i}\left(1-\left\langle\nabla \mathcal{T}_{\theta_2}\left(\bx+D_{\theta_1}^v(\bx)\right), \bar{n}\right\rangle\right), \label{eq:normal}
\end{equation}
\begin{equation}
    \mathcal{L}_{\text{c}} = \sum_i\sum\limits_{\bx\in\Omega_P} | \mathcal{D}_{\theta_1}^{\Delta s}(\bx)|. \label{eq:correct}
\end{equation}
We also apply $\ell_2$ regularization on shape latent codes and part deformation priors
\begin{equation}
    \mathcal{L}_{\text{emb}} = \sum_i^N\|\boldsymbol{z}_i\|^2_2+\sum_j^k\|\boldsymbol{e}_j\|^2_2. \label{loss:emb}
\end{equation}
The regularizers are collectively written as
\begin{equation}
    \label{eq:reg}
    \mathcal{L}_{\text{reg}} =  \gamma_5 \mathcal{L}_{\text{smooth}}+\gamma_6 \mathcal{L}_{\text{normal}}+ \gamma_7 \mathcal{L}_{\text{c}}+ \gamma_8 \mathcal{L}_{\text{emb}}.
\end{equation}
Hence, the final loss becomes
\begin{equation}\label{eq:all}
    \mathcal{L} = \mathcal{L}_{\text{rec}}+ \mathcal{L}_{\text{dc}}+\mathcal{L}_{\text{reg}}.
\end{equation}

\noindent\textbf{Evaluating dense correspondence.}
Given two pairs of seen or unseen shapes $P, \,Q$, we can evaluate the dense correspondence between $P$ and $Q$ by deforming each shape to the common template space.
Moreover, the uncertainty of dense correspondence performance between two points $\bp\in P$ and $\bq\in Q$ can be measured as below:
\begin{equation}
\scalebox{1.}{
$u(p,q) = 1 - {\rm exp} (-\gamma\|(p+\Delta p)-(q+\Delta q)\|_2^2). \label{eq:uncertain}
$}
\end{equation} 
$\gamma$ is a hyperparameter for modulation and $\Delta p, \Delta q$ are output from $\mathcal{D}_{\theta_1}$.
Note that since each part deformation prior $\boldsymbol{e}$ is shared based on part semantics across entire training shapes, we do not need a test-time optimization on $\boldsymbol{e}$ for an unseen shape at inference time.

\begin{table}
	\caption{\textbf{Keypoint
 correspondence accuracy in KeypointNet~\cite{DBLP:conf/cvpr/YouLLCLMLW20}}. PCK scores with threshold of $0.05$ / $0.1$ are reported. }
	\label{tab:kp}
    \begin{adjustbox}{width=0.48\textwidth}
    \centering
	\begin{tabular}{l c c c c}
		\toprule
        {PCK} & {airplane} & {car} & {chair} & {table} \\
		\midrule	   
        {AtlasNetV2}~\cite{atlasnetv2deprelle2019learning}  &23.9 / 45.3 & 33.5 / 64.0 & 25.6 / 50.2 &25.3 / 48.0\\
        {DIT}~\cite{deng2021deformed}  &13.6 / 34.2 &58.9 / 87.7 & 33.3 / 66.7 & 26.2 / 55.5\\
        {DIF}~\cite{zheng2020dit}  &16.5 / 40.5 &67.8 / 90.4 & 34.3 / 57.6 & 37.9 / 62.9\\
		\midrule
		{Ours}  &\textbf{24.2} / \textbf{52.3} &\textbf{69.3} / \textbf{91.8} & \textbf{61.2} / \textbf{83.2} & \textbf{50.8}/\textbf{77.9}\\
		\bottomrule
	\end{tabular}
    \end{adjustbox}
\end{table}

\begin{table} 
	\caption{\textbf{5-shot part label transfer in four categories of ShapeNet}. The reported numbers are overall mIoU.}
	\label{tab:ptl}
    \centering
	\begin{tabular}{l c c c c}
		\toprule
        {model } & {airplane} & {car} & {chair} & {table} \\
		\midrule
            AtlasNetV2~\cite{atlasnetv2deprelle2019learning}&57.8&59.3  &66.9&70.6\\
		DIT~\cite{zheng2020dit}  &64.2 &66.1& 80.4 & 86.0  \\
		DIF~\cite{deng2021deformed}  &73.9&69.6&82.1 &86.6 \\
		\midrule
		Ours   &\textbf{75.1}&\textbf{70.4}&\textbf{84.2}  &\textbf{86.9} \\
		\bottomrule
	\end{tabular}

\end{table}

\section{Experiments}

We demonstrate the effectiveness of our proposed framework and provide both quantitative and qualitative results.
In \Cref{sec:5.1}, we conduct multiple label transfer tasks for dense correspondence evaluation.
In \Cref{sec:5.2}, we measure shape reconstruction accuracy to evaluate the representational power for unknown shapes.
In \Cref{sec:5.3}, we present advanced dense correspondence analysis under more challenging settings.
Lastly, in \Cref{sec:5.4}, we perform ablation studies to analyze the proposed regularization and the sensitivity over feature extractor~\cite{chen2019bae_net}.\\

\begin{figure*}[t!]
\centering
            \includegraphics[width=0.98\textwidth]{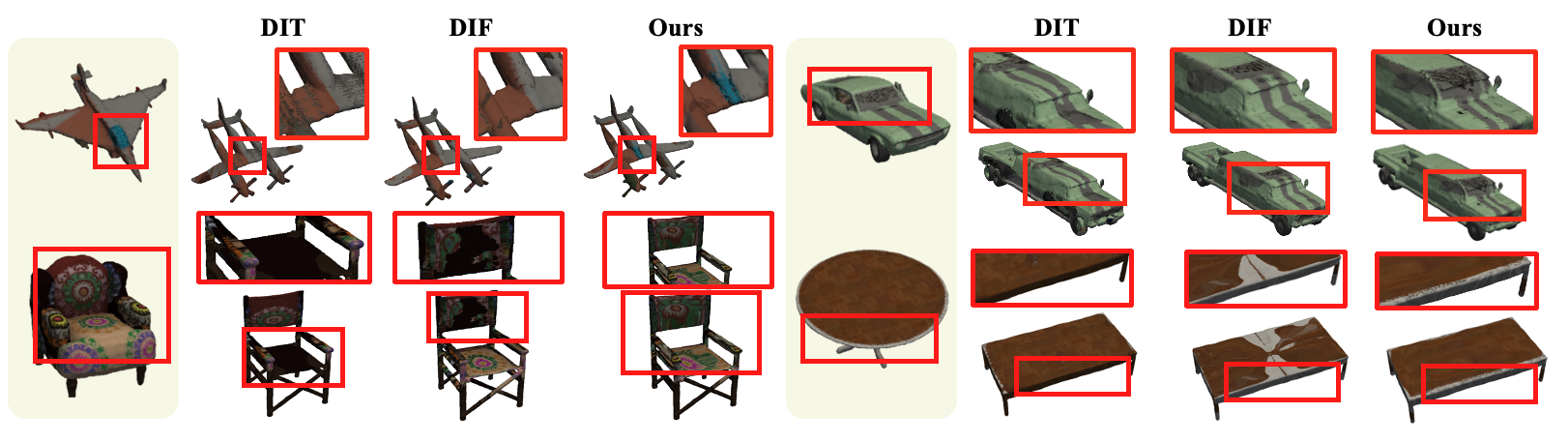}   
        \caption
         {\textbf{Texture transfer result on ShapeNet.} We transfer part segment labels from source shapes (colored in ivory) to target shapes and compare ours with DIT~\cite{zheng2020dit} and DIF~\cite{deng2021deformed}. For additional visualizations, please refer to the supplement.}
\label{fig:texture}
\vspace{-10pt}
\end{figure*}

\vspace{-10pt}
\noindent\textbf{Datasets and Baselines.}
We train and validate our model and baselines on four categories ($\eg$, chair, table, airplane and car) of ShapeNetV2~\cite{chang2015shapenet}.
We follow the training split of DIF~\cite{deng2021deformed} for each category.
For label transfer tasks, we use the labels from ShapeNet-Part~\cite{shapenetprat/tog/YiKCSYSLHSG16} and KeypointNet~\cite{DBLP:conf/cvpr/YouLLCLMLW20}.
DIT~\cite{zheng2020dit} and DIF~\cite{deng2021deformed} are the most relevant baseline models as they both globally learn implicit template fields, and we select AtlasNetV2~\cite{atlasnetv2deprelle2019learning} as an additional baseline.
Dense correspondence is measured with local patch composition in~\cite{atlasnetv2deprelle2019learning}.
For more implementation details, analyses, and visualizations, see the supplement.

\begin{table}[t]
	\caption{\textbf{Reconstruction accuracy for unseen shapes in four categories of ShapeNet.} The reported numbers are chamfer distance (CD) between ground truth and reconstructed meshes at a resolution of $256^3$.}
	\label{tab:recon}
    \centering
	\begin{tabular}{l c c c c}
		\toprule
        {CD ($\times10^3$)$\downarrow$ } & {airplane} & {car} & {chair} & {table} \\
		\midrule
		DIT~\cite{zheng2020dit}    &0.2261     & 0.4082&0.3063 &0.6187  \\
		DIF~\cite{deng2021deformed}   &0.2093   & 0.3157 &0.4229 & \textbf{0.2640}\\
		\midrule
		Ours     & \textbf{0.1859}   & \textbf{0.3150}  & \textbf{0.2265} & 0.2796  \\
		\bottomrule
	\end{tabular}
 \vspace{-5pt}
\end{table}

\subsection{Dense correspondence}\label{sec:5.1}
To evaluate the quality of learned dense correspondence, we conduct various surrogate tasks including keypoint transfer, part label transfer, and texture transfer due to the lack of labeled datasets for 3D dense correspondence.
For the following tasks, we transfer respective labels from source shapes to target shapes in a common template space.

\noindent\textbf{Keypoint label transfer.}
Table~\ref{tab:kp} presents the quantitative result in a 5-shot keypoint transfer task.
We measure the performance by a percentage of correct keypoints (PCK)~\cite{yi2016syncspeccnn}. 
We compute the distance between the transferred keypoints and the ground truth points to record the PCK score with the threshold of 0.05/0.1. 
Our method achieves the best performance in all four categories with a significant improvement. 
Especially for categories with high structural variability (\eg, chair and table), a large performance gain is observed.
For instance, even compared to the best baseline, on the chair category with the threshold of 0.1, our method (83.2) outperforms DIT~\cite{zheng2020dit} (66.7) by 16.5$\%$, and on the table category with the threshold of 0.1, our method (77.9) outperforms DIF~\cite{deng2021deformed} (62.9) by 15$\%$.
We additionally provide the visualization of keypoint transfer in Figure~\ref{fig:kp}, which also support the importance of understanding part semantics for shape correspondence.
In particular, two keypoints of pea-green and orange from the source chair are geometrically close but semantically different since one is for the arm part and the other is for the seat part.
Among several methods, only our framework succeeds in disentangling two keypoints in the target chair.
We provide more qualitative results regarding keypoint transfer results in real-world data~\cite{uy-scanobjectnn-iccv19} to validate the robustness of our method in the supplement.
\vspace{5pt}

\noindent\textbf{Part label transfer.}
For few-shot part label transfer tasks, we transfer part segment labels from source shapes to target shapes.
In detail, for each (target) point of the deformed target shape, we find the $n$ nearest points of the deformed source shapes and 
perform weighted voting using their labels and distances to the target point. 
In our experiments, we used $n=10$ nearest points.
For a 5-shot part label transfer task in Table~\ref{tab:ptl}, five source shapes were used just the same as~\cite{deng2021deformed} for every category.
Table~\ref{tab:ptl} shows our framework achieves the best performance over all four categories. 
Moreover, Figure~\ref{fig:ptl} illustrates the qualitative results of one-shot part label transfer on ShapeNet~\cite{chang2015shapenet}.
Similar to keypoint transfer, Figure~\ref{fig:ptl} also proves the effectiveness of semantic knowledge in dense correspondence.

\noindent\textbf{Texture transfer.}
Figure~\ref{fig:texture} shows texture transfer results on ShapeNetV2~\cite{chang2015shapenet} to compare established dense correspondence between baselines and ours with higher-frequency label transfer.
We observe notable texture transfer results.
For instance, our framework successfully transfers the source car's texture of the windshield ($\color{gray}{gray}$) to the corresponding part of the target cars, whereas baselines transfer the same textures to the roof part of the target cars. 

\subsection{Shape reconstruction}\label{sec:5.2}
We report the reconstruction quality of unseen shapes with Chamfer Distance (CD) to validate the representational power of our framework in Table~\ref{tab:recon}.
A smaller CD means more accurate reconstruction.  
As in previous work, we report the CD $\times 10^3$ for readability. 
Also, we consistently discuss the CD and performance gain below at this scale.
We optimize the corresponding shape latent code for a given unseen shape.
We measure CD as in DIF~\cite{deng2021deformed}, where reconstructed meshes are extracted at a resolution of $256^3$, and CD is calculated with $10K$ sampled points.
Our framework on average achieved the best reconstruction results surpassing DIF by a margin of 0.0512. 
On average, our method and DIF exhibit 0.2518 and 0.3030 reconstruction errors, respectively.
In three out of four categories, our method achieved the best reconstruction accuracy.

\subsection{More challenging setups}\label{sec:5.3}
To further validate the effectiveness of our method, we perform additional experiments and compare the results with DIF~\cite{deng2021deformed}.
Overall, the results below imply current methods that lack part semantics and do not consider part deformation consistency may fail to learn semantically plausible deformation in challenging cases.

\noindent\textbf{2-shot part label transfer within different subcategories.}
Our objective is to assess dense correspondence performance under difficult settings.
We conduct 2-shot part label transfer within \textit{two} different subcategories belonging to the same category, where shapes from different subcategories tend to have different geometry for the same semantics.
Since ShapeNet~\cite{chang2015shapenet} includes incorrect subcategory labels~\cite{liu2021fine}, we manually cleaned subcategory labels for four categories (\eg, airplane: airliner/jet, car: sedan/jeep, chair: straight chair/sofa, table: pedestal table/short table).
Figure~\ref{fig:source} displays examples of shapes from each subcategory.
For robust evaluations, we randomly select the source shape 5 times for each subcategory.
Table~\ref{tab:subcategory} shows our framework achieves a significant performance gain in all cases by over 7.2$\%$ on average compared to DIF~\cite{deng2021deformed}.

\noindent\textbf{Unsupervised dense correspondence in DFAUST~\cite{dfaust:CVPR:2017}.}
Furthermore, we evaluate unsupervised dense correspondence with non-rigid shapes~\cite{dfaust:CVPR:2017}.
For more challenging settings, we train both DIF~\cite{deng2021deformed} and our model for two different subjects with different sizes and heights  (\eg, subject ID of 50002, 50026) and four distinct actions (\eg, shake arms, chicken wings, running on spot, and jumping jacks). 
Colormap in Figure~\ref{fig:dfaust2} demonstrates the correspondence quality, where we transfer color over the up-axis.
In the source shape, the arms are labeled in red. 
After transferring these color labels to the target shapes, our method preserves the red/orange labeling of the arms across different motions, whereas the baseline fails to capture the same labeling (\ie, yellow hands) since it lacks semantic knowledge during training.
Additionally, we conduct keypoint transfer tasks and measure the PCK score as identical as in Section~\ref{sec:5.1}.
Our model achieves 74.9/85.2 with a threshold of 0.05/0.1, whereas DIF~\cite{deng2021deformed} achieves 67.0/80.7 with the same threshold, showing our superior performance in the challenging settings (see the supplement for the full keypoint transfer result).
Note that we can also learn implicit templates for non-rigid shapes with supervision as in~\cite{sundararaman2022implicit}.
We conduct experiments in unsupervised settings since our focus is to validate the effectiveness of utilizing additional semantic information for template learning.

\begin{figure}[t!] 
\centering
\includegraphics[width=0.5\textwidth]{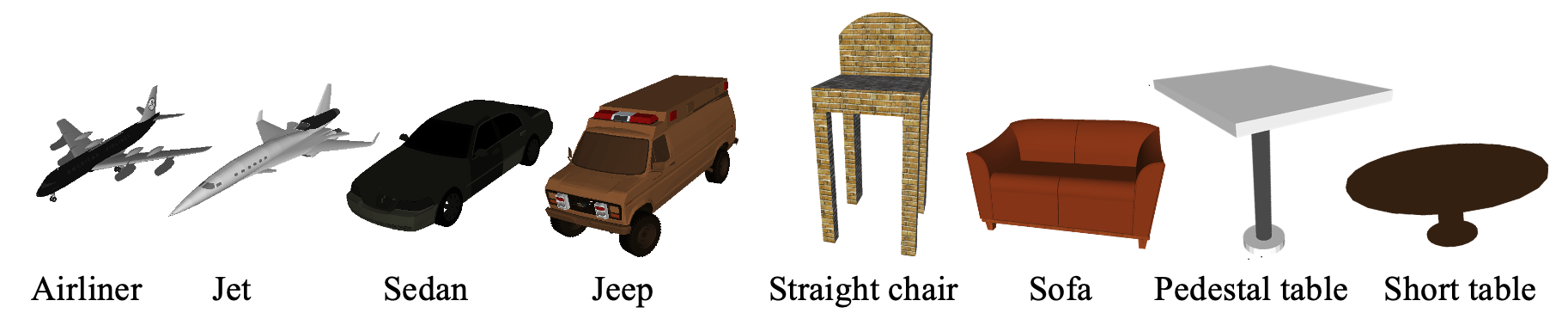}
\caption{\textbf{Types of source shapes for 2-shot part label transfer.}}
\label{fig:source}
\end{figure}

\begin{table}[ht!] 
	\caption{\textbf{2-shot part label transfer in subcategories of ShapeNet}. The reported numbers are overall mIoU over 5 runs.}
	\label{tab:subcategory}
    \centering
	\begin{tabular}{l c c}
		\toprule
            subcategories&DIF~\cite{deng2021deformed}& Ours\\
		\midrule
            airplane(airliner/jet) & 64.5{\scriptsize$\pm$1.61} &  \textbf{72.8}{\scriptsize$\pm$0.99} (\textcolor{blue}{+8.3$\uparrow$})\\
            car(sedan/jeep) & 65.1{\scriptsize$\pm$2.34} & \textbf{69.1}{\scriptsize$\pm$1.46} (\textcolor{blue}{+4.0$\uparrow$})\\
            chair(straight/sofa) & 66.3{\scriptsize$\pm$6.06} & \textbf{77.5}{\scriptsize$\pm$5.81} (\textcolor{blue}{+11.2$\uparrow$})\\
            table(pedestal/short) & 74.4{\scriptsize$\pm$10.1} & \textbf{79.7}{\scriptsize$\pm$6.26} (\textcolor{blue}{+5.3$\uparrow$})\\
            \bottomrule
	\end{tabular}
\end{table}

\begin{figure} 
\centering
\includegraphics[width=0.49\textwidth]{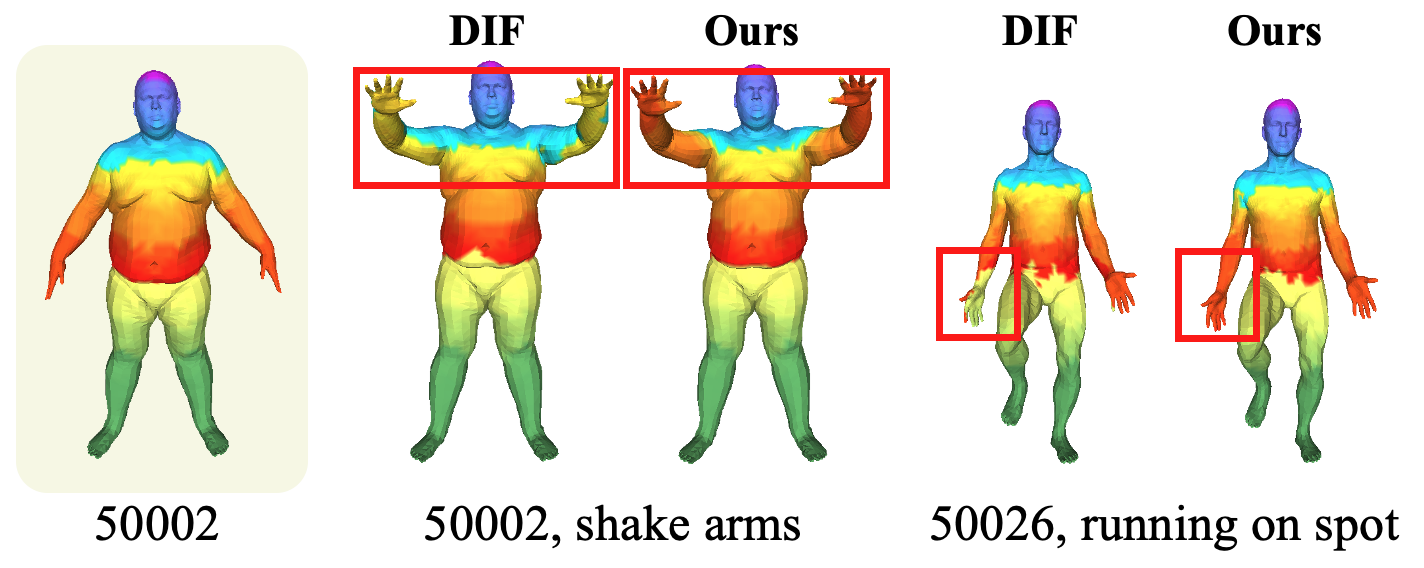}
\caption{\textbf{Color transfer on DFAUST~\cite{dfaust:CVPR:2017}.} We transfer different color labels over the up-axis from source shapes (colored in ivory) to target shapes and compare ours with DIF~\cite{deng2021deformed}.}
\label{fig:dfaust2}
\end{figure}

\subsection{Ablations}\label{sec:5.4}

\noindent\textbf{Ablation on regularizations.}
We conduct ablation studies regarding the proposed regularizations including part deformation consistency regularizations, global scale consistency regularization, and global deformation consistency regularization.
We conduct 5-shot part label transfer within a chair category, as same as \Cref{sec:5.1}.
Table~\ref{tab:abl} shows that each component of our proposed regularizers contributes to performance improvement for dense correspondence.
Note that by only using $\mathcal{L}_{\text{pdc}}$, \ie, using $\pdcgeo$ and $\pdcsem$, it still outperforms than baselines (80.4~\cite{zheng2020dit} and 82.1~\cite{deng2021deformed}).
The best performance was achieved by incorporating all proposed regularizations.

\begin{table}[t] 
  \centering
  \caption{\textbf{Ablations for every proposed regularization}. We report mIoU for 5-shot part label transfer on ShapeNet chair. Note that $\mathcal{L}_{\text{pdc}}$ denotes $\pdcgeo + \pdcsem$.}
  \label{tab:abl}
\setlength{\tabcolsep}{6pt}
\renewcommand{\arraystretch}{1.05}
  \begin{tabular}{c|ccc|c}
    \toprule
    \multicolumn{1}{c}{
    }&$\mathcal{L}_{\text{pdc}}$ &$\gdc$   &$\mathcal{L}_{\text{scale}}$ &chair\\
    \midrule
    (a)&\checkmark && &  83.4\\
    (b)&\checkmark  &\checkmark& &  83.9\\
    (c)&\checkmark  &\checkmark& \checkmark&  \textbf{84.2}\\
    \bottomrule
  \end{tabular}
\end{table}

\begin{table}[t]\footnotesize
  \centering
  \caption{\textbf{BAE-Net $k$ ablation.} We report mIoU for 5-shot part label transfer on ShapeNet chair over 3 runs.}
  \label{tab:kablation}
\setlength{\tabcolsep}{8pt}
\renewcommand{\arraystretch}{1}
  \resizebox{0.95\columnwidth}{!}{%
  \begin{tabular}{c|ccccc}
        \toprule
         & $k=2$& $k=4$ & $k=6$ & $k=8$ & $k=10$ \\
        \midrule
        chair& 83.8{\scriptsize$\pm$0.51}&83.5{\scriptsize$\pm$1.48}&82.8{\scriptsize$\pm$1.27} & 83.6{\scriptsize$\pm$0.85} &83.7{\scriptsize$\pm$1.33} \\
        \bottomrule
  \end{tabular}}
  \vspace{-10pt}
\end{table}

\noindent\textbf{BAE-Net~\cite{chen2019bae_net} with different $k$.}
We evaluate the effect of $k$ for BAE-Net~\cite{chen2019bae_net} with mean/standard deviation over 3 runs for ShapeNet~\cite{chang2015shapenet} chair in 
We train BAE-Net~\cite{chen2019bae_net} with a range of $k=(2,4,6,8,10)$, where $k=4$ is used in the main paper, and report mIoU for 5-shot part label transfer task.
Table~\ref{tab:kablation} shows the efficacy of semantic priors, where our framework achieves the average mIoU of 83.5, which shows higher performance than baselines.
More importantly, we observed that the performance is fairly robust over $k$.

\section{Conclusion}
We have proposed a new framework that learns a semantic-aware implicit template field. 
Our framework extracts semantic knowledge from a self-supervised feature extractor given unlabelled point clouds. 
By leveraging the semantic information, we propose novel conditioning with semantic-aware deformation code and part deformation consistency regularizations to encourage semantically plausible deformation.
Furthermore, we suggest global scaling consistency and global deformation consistency regularizations to enhance the learning process.
Through quantitative and qualitative evaluation, we demonstrate the superiority of our proposed method and verified the effect of all proposed components in our framework.

\paragraph*{Acknowledgments.}
This research was supported in part by the MSIT (Ministry of Science and ICT), Korea, under the ICT Creative Consilience program (IITP-2023-2020-0-01819) supervised by the IITP (Institute for Information \& communications Technology Planning \& Evaluation); the National Research Foundation of Korea (NRF) grant funded by the Korea government (MSIT) (NRF-2023R1A2C2005373); and Google Cloud Research Credits program with the award (GCLF-67WL-WU6D-MF14, V9P3-99XD-64CL-EH5H).

{\small
\bibliographystyle{unsrt}
\bibliography{main}
}
\clearpage

\begin{center}
    \Large \textbf{Supplementary materials}\\
\end{center}

\appendix
We provide additional experimental results/details and discussion in this supplement.
The supplement consists of 
(1) experiment details ($\eg$, dataset statistics, implementation details), 
(2) derivation of the closed form for the optimal global scaling factor $r$ given deformation $\mathcal{D}$,
(3) sensitivity test for different coefficients of proposed regularizations,
(4) additional results with different semantic priors,
(5-7) qualitative results on ShapeNetV2~\cite{chang2015shapenet}, ScanObjectNN~\cite{uy-scanobjectnn-iccv19}, and DFAUST~\cite{dfaust:CVPR:2017},
(8) analysis of learned implicit template fields, and
(9) limitations.
In each visualization, note that all shapes placed on the $\color{beige}{ivory}$ background (and leftmost) are source shapes for label transfer tasks.

\section{Experiment details}
\label{suppl_s1}

\subsection{Data statistics}
We mainly use four categories ($\eg$, chair, table, airplane and car) of ShapeNetV2\footnote{Copyright (c) 2022 ShapeNET.}~\cite{chang2015shapenet} with labels from ShapeNet-Part~\cite{shapenetprat/tog/YiKCSYSLHSG16} and KeypointNet~\cite{DBLP:conf/cvpr/YouLLCLMLW20}.
We follow DIF\footnote{https://github.com/microsoft/DIF-Net}~\cite{deng2021deformed} for data splitting and data preparation.
Table~\ref{tab:dstat} summarizes the data statistics. 
For additional experiments in Section 5.3, first, we use a subset of ShapeNetV2~\cite{chang2015shapenet} that consists of two subcategories in each category (\eg, airplane: airliner/jet, car: sedan/jeep, chair: straight, chair/sofa, table: pedestal table/short table).
As ShapeNetV2 contains incorrect subcategory labels, we manually cleaned subcategory labels and sample 50 shapes for each subcategory. To avoid any bias towards a specific shape structure, we use an equal number of subcategory shapes.
Second, for DFAUST~\cite{dfaust:CVPR:2017} dataset, we select two different subjects (50002 and 50026), where 50002 is of a larger size, being overweight and taller compared to 50026, and four distinct actions (\eg, shake arms, chicken wings, running on spot, and jumping jacks).
\begin{table}[hbt!]\footnotesize
  \caption{\textbf{Data statistics.}}\label{tab:dstat}
  \centering
\setlength{\tabcolsep}{6pt}
\renewcommand{\arraystretch}{1.05}
  \begin{tabular}{ccccc}
    \toprule
    \multicolumn{1}{c}{\multirow{2}{*}{}}&
    \multicolumn{4}{c}{Category} \\
    \cmidrule{2-5}
    &
     \multicolumn{1}{c}{\textbf{airplane}} & \multicolumn{1}{c}{\textbf{car}} & \multicolumn{1}{c}{\textbf{chair}} & \multicolumn{1}{c}{\textbf{table}}\\
    \midrule
    \midrule
   Training data& 3500&3000&4000&4000	 \\
   Part labeled data& 3195&	2728&	3551&3671\\
   Keypoint labeled data& 889&870&577&545\\
   \midrule
   Evaluation data (recon)& 100&	100&100&100	\\
    \bottomrule
  \end{tabular}
\vspace{-10pt}
\end{table}

\subsection{Implementation details}
For feature extraction, we mainly use BAE-Net\footnote{https://github.com/czq142857/BAE-NET}~\cite{chen2019bae_net} and follow their training schemes.
In detail, we prepare 8,192 surface points from voxel grid sampling and 32,768 query points from random sampling for each shape following IM-Net~\cite{chen2018implicit_decoder}.
We train the encoder with a batch size of 1 for 60 epochs, using Adam~\cite{kingma2015adam} optimizer with a learning rate of 0.0001 for car and 0.00005 for the rest of the categories.
It takes about 2 hours on average to fully train the encoder with a single GPU (RTX 2080ti).
For the hyperparameter $k$, we use 6/8/4/4 for airplane/car/chair/table category, and we use 256 for the global latent code dimension.

For training deformation field  $\mathcal{D}_{\theta_1}$ and template field $\mathcal{T}_{\theta_2}$, we train the model with a batch size
of 128 for 60 epochs, using Adam~\cite{kingma2015adam} optimizer with the learning rate of 0.0001.
It takes about 6 hours on average to fully train the model.
Also, we use 256 for the dimension of each shape latent code and we use 64 for the dimension of each part deformation prior.

For the proposed regularizations, we grid search coefficients such as global scale consistency $ \mathcal{L}_{\text{scale}}$ and global deformation consistency $\gdc$ in the range of [10,500]/[50,100].
For the most \textit{influential} regularization, which is part deformation consistency $\pdcgeo$, we grid search the coefficient in the separate range according to the categories ($\eg$, [1000,2000] for chair, [750,1000] for airplane and car, [250,500] for table).
The final coefficients we used are described in Table~\ref{hparam}.
For the rest of the regularizers, we fix the coefficient as 50 for $\pdcsem$, 100 for $\mathcal{L}_{\text{normal}}$, 1e6 for $\mathcal{L}_{\text{emb}}$ in every category.
For the regularizers for deformation smoothness $\mathcal{L}_{\text{smooth}}$, and minimal correction $\mathcal{L}_{\text{c}}$, we grid search in the range of [1,5,10]/[50,100,500] and apply 10/5/5/1 and 50/10/500/50 in sequential order to the four categories: airplane, car, chair, and table.

\begin{table}[hbt!]\footnotesize
  \caption{\textbf{Regularization coefficients for each category.}} \label{hparam}
  \centering
\setlength{\tabcolsep}{6pt}
\renewcommand{\arraystretch}{1.05}
  \begin{tabular}{c|ccc}
    \toprule
coef.&$\pdcgeo$&$\mathcal{L}_{\text{scale}}$&$\Lc_{\text{geo}}$\\
    \midrule
    \textbf{airplane}& 750&	10&	50 \\
    \textbf{car}& 1000	&10&	50\\
    \textbf{chair}& 2000&500&	100	\\
    \textbf{table}& 250&	500&	100\\
    \bottomrule
  \end{tabular}
\vspace{-10pt}
\end{table}

For baseline models, we validate surrogate tasks with provided pretrained models; if none exists, we train the model from scratch based on source code from the original authors, $\eg$, DIT\footnote{https://github.com/ZhengZerong/DeepImplicitTemplates} and AtlasNetV2\footnote{https://github.com/TheoDEPRELLE/AtlasNetV2}.
Unlike implicit template learning models, AtlasNetV2~\cite{atlasnetv2deprelle2019learning} does not learn a global template, rather it learns decomposed local patches.
Here, for AtlasNetV2~\cite{atlasnetv2deprelle2019learning}, we evaluate the part label transfer task as identical as DIF~\cite{deng2021deformed}, and also similarly evaluate the keypoint label transfer task, where we use the average points of each corresponding keypoint from source shapes as transferred keypoint labels.
Lastly, all experiments are implemented in Pytorch\footnote{Copyright (c) 2016-Facebook, Inc (Adam Paszke), Licensed under BSD-style license}~\cite{PyTorch} and Pytorch3D~\cite{ravi2020pytorch3d} and conducted on 4 NVIDIA RTX A6000.

\section{Proof of global scaling factor $r$}
\label{suppl_s2}

We propose global scale consistency regularization to preserve the scale of the implicit template field against strong deformations based on the following lemma and its proof.
\begin{lemma}
Given a scalar field (shape) $X \in \mathbb{R}^{3 \times M}$ and a non-rigid deformation $\mathcal{D}:\bx \in \mathbb{R}^3 \rightarrow \dfm \in \mathbb{R}^3$, we define a global scaling factor $r$ of $\mathcal{D}$ as an optimal solution to the following problem:
\begin{equation}
r^* = \argminU\limits_{r}\sum\limits_{i=1}^M \|\bx_i +\dfm_i - r\bx_i\|_2^2. 
\label{eq:scale_suppl}
\end{equation} 
Then, the optimal solution can be analytically obtained by
\begin{equation}
r^* =  \frac{\sum_{i=1}^M \bx_i^{\top} (\bx_i +\dfm_i)}{\sum_{j=1}^M (\bx_j^{\top} \bx_j)},
\label{eq:optr}
\end{equation} 
where $\bx_i \in X$ and $\dfm_i \in \mathcal{D}(x_i)$.
\end{lemma}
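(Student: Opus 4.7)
The plan is to treat the right-hand side of \eqref{eq:scale_suppl} as a one-dimensional unconstrained convex quadratic optimization problem in the scalar $r$, and then simply apply the first-order optimality condition. Since $r$ is a scalar and the objective is a finite sum of squared Euclidean norms, no variational machinery is needed; ordinary single-variable calculus suffices.

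First, I would expand the squared norm termwise to write the objective as
\begin{equation*}
 f(r) = \sum_{i=1}^M \bigl(\|\bx_i + \dfm_i\|_2^2 - 2 r\, \bx_i^{\top}(\bx_i + \dfm_i) + r^{2}\, \bx_i^{\top}\bx_i\bigr),
\end{equation*}
isolating the constant, linear, and quadratic contributions in $r$. This makes it transparent that $f$ is a univariate quadratic in $r$ with positive leading coefficient $\sum_i \bx_i^{\top}\bx_i \ge 0$, hence convex and bounded below.

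Next, I would differentiate with respect to $r$ to obtain $f'(r) = -2\sum_i \bx_i^{\top}(\bx_i + \dfm_i) + 2r \sum_i \bx_i^{\top}\bx_i$, and set $f'(r)=0$ to solve for the unique critical point, yielding exactly the expression \eqref{eq:optr}. To justify that this stationary point is the global minimizer and not merely a critical point, I would note that $f''(r) = 2\sum_i \bx_i^{\top}\bx_i > 0$ (assuming at least one $\bx_i \neq \mathbf{0}$, which is harmless since the degenerate all-zero case trivially makes the objective independent of $r$), so $f$ is strictly convex and the critical point is the unique minimizer.

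The only step requiring any care is the mild non-degeneracy assumption that $\sum_i \bx_i^{\top}\bx_i \neq 0$, i.e., the shape is not a single point at the origin; otherwise the denominator in \eqref{eq:optr} vanishes and any $r$ is optimal. This is not really an obstacle but is worth flagging in a sentence. Beyond that, the derivation is a short, routine least-squares calculation, so I expect no genuine difficulty in the proof.
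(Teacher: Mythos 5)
Your proof is correct and takes essentially the same approach as the paper: differentiate the objective with respect to $r$, set the derivative to zero, and solve the resulting linear equation. You add the (welcome but routine) extra care of exhibiting the quadratic structure, checking strict convexity via $f''(r)>0$, and flagging the degenerate case $\sum_i \bx_i^\top\bx_i = 0$, none of which the paper's one-line differentiation bothers to spell out.
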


\begin{proof} 
We defined the global scaling factor as the optimal solution to the following problem: 
\begin{equation}
r^* = \argminU\limits_{r}\sum\limits_{i=1}^M \|\bx_i +\dfm_i - r\bx_i\|_2^2. 
\label{eq:scale_suppl}
\end{equation} 
To find a closed-form solution, we differentiate~\eqref{eq:scale_suppl} by $r$:
\begin{multline}
{\partial\over\partial r}\sum\limits_{i=1}^M \|\bx_i +\dfm_i - r\bx_i\|_2^2
= \sum\limits_{i=1}^M(\bx_i +\dfm_i - r\bx_i)^{\top}\bx_i \\
= \sum\limits_{i=1}^M(\bx_i +\dfm_i)^{\top}\bx_i - \sum\limits_{j=1}^Mr\bx_j^{\top}\bx_j    = 0 
\label{eq:scale_suppl2}
\end{multline} 
Finally, we can acquire the solution $r^*=\frac{\sum_{i=1}^M \bx_i^{\top} (\bx_i +\dfm_i)}{\sum_{j=1}^M\bx_j^{\top} \bx_j}$ from~\eqref{eq:scale_suppl2}.
\end{proof}
In this paper, we encourage the learned implicit template to preserve its scale by the \textit{average} scale of $N$ deformed shapes in a single batch.
That is, we simply estimate the global scaling of all $N$ shapes in a mini-batch by
$r_\text{batch} = \frac{\sum\limits_{s=1}^N\sum\limits_{i=1}^M \bx_i^{s^\top} (\bx_i^s +\dfm_i^s)}{\sum\limits_{s'=1}^N\sum\limits_{j=1}^M \bx_j^{s'^\top} \bx_j^{s'}}  \approx 1$.
This treats all shapes as a point cloud and finds one global scaling of it.

Further, we can impose a slightly different regularization with the expectation of $r_s$. Given a set of points $\{x_i^s\}_i$ in shape $s$,  a shape-specific global scaling factor $r_s$ and its expectation are defined as
\begin{equation}
\begin{split}
r_s &= \frac{\sum\limits_{i=1}^M \bx_i^{s^\top} (\bx_i^s +\dfm_i^s)}{\sum\limits_{j=1}^M \bx_j^{s^\top} \bx_j^s} \\
\mathbb{E}_s [r_s] &=\sum_{s=1}^N \frac{1}{N} r_s\\
\end{split}
\end{equation}
Then, the regularizer is given as
\begin{equation}
\mathcal{L}_\text{scale} = |\mathbb{E}[r] - 1|.
\end{equation}

We observed that in our preliminary experiments, these regularizations above allow more flexibility than enforcing the global scaling of each individual shape, \ie, $ \sum_{s=1}^N|r_s - 1|$. 
The final equation of global scaling consistency regularization is in ($\color{red}{9}$) of the main paper.

\section{Sensivity test for different coefficients}
\label{suppl_s3_0}

We analyze the effect of coefficients for suggested regularizations ($\pdcgeo/ \gdc/ \mathcal{L}_{\text{scale}}$).
We perform experiments with a sub-dataset (same as Table 5 in the main paper) for ShapeNet airplane/chair and report mIoU as 2-shot part label transfer results.
Figure~\ref{sensitivity} shows that $\pdcgeo$ (highlighted with the \textcolor{red}{red} box for cases without $\pdcgeo$) significantly improves performance. 
When the coefficient of $\pdcgeo$ is properly set, our method is robust to the choice of weights for other losses: $\gdc$ and $\mathcal{L}_{\text{scale}}$.
As shown in the boxes highlighted in \textcolor{amber}{yellow} and \textcolor{green2}{green}, our method stably achieves good performance regardless of the coefficients for $\gdc$ and $\mathcal{L}_{\text{scale}}$.

\begin{figure}[h] 
\centering
  \vspace{-10pt}
\includegraphics[width=0.5\textwidth]{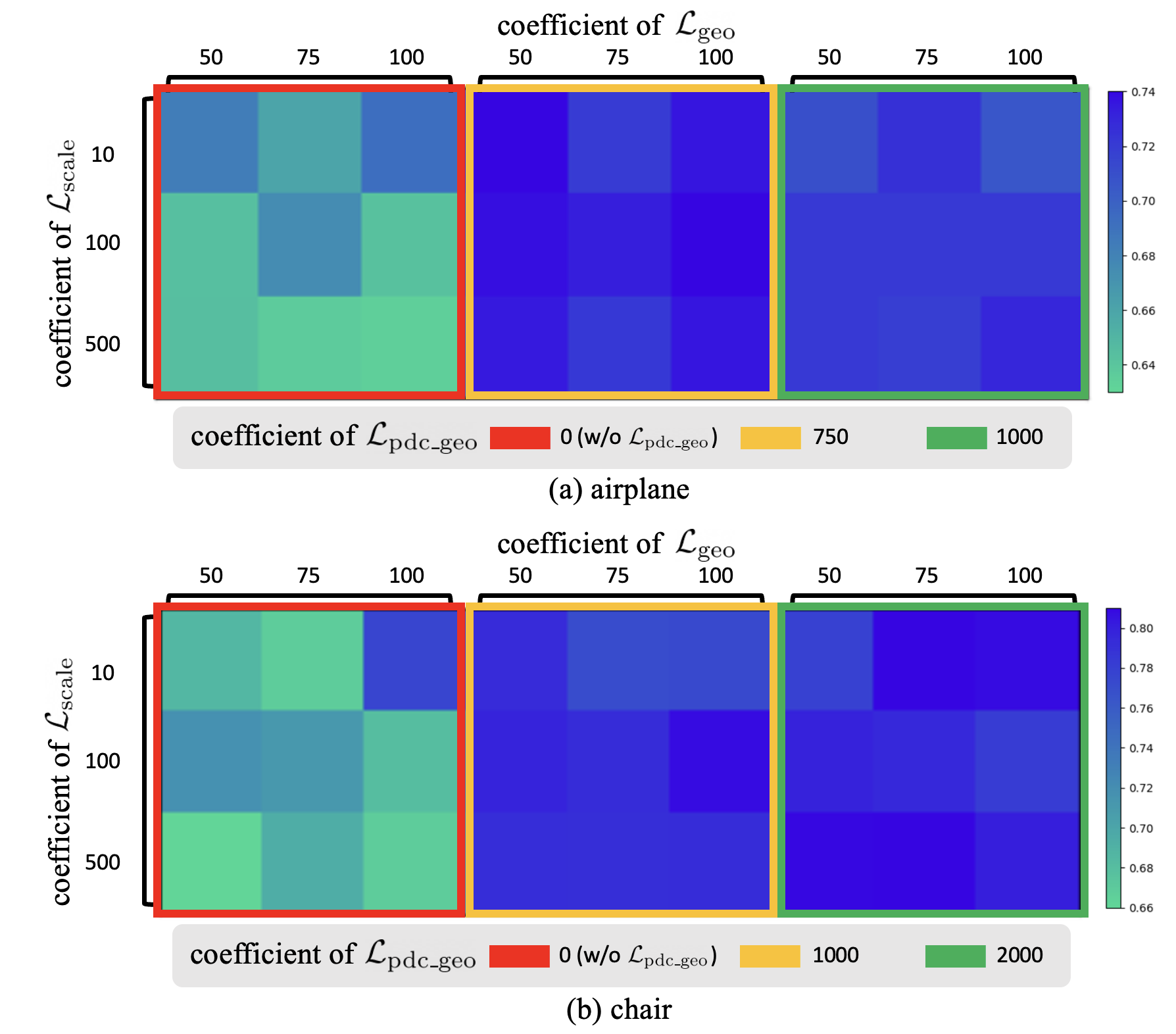}
\caption{
\label{sensitivity}
\small\textbf{Sensitivity test for $\pdcgeo/ \gdc/ \mathcal{L}_{\text{scale}}$.} }
\label{fig:sense}
\end{figure}

\section{Different semantic priors}
\label{suppl_s3}

We provide an additional experimental result with RIM-Net~\cite{niu2022rim}, which is a self-supervised co-segmentation model for 3D object shapes. 
The pre-trained RIM-Net\footnote{https://github.com/chengjieniu/RIM-Net} is used and we conduct 2-shot part label transfer with the subcategories for the chair of ShapeNetV2.
Since RIM-Net provides different levels of part semantics, \eg, two-part partitions for level 1, and eight-part partitions for level 3, we leverage this characteristic to evaluate our framework across different levels of part quality.
Based on Table~\ref{priors}, we believe that even if the given prior does not have high-level semantics, our framework improves performance as long as the prior is \textit{consistent}.

\begin{table}[h!]
  \centering
  \caption{\textbf{Utilizing different semantic priors}. 2-shot part label transfer in subcategories (straight chair and sofa) for ShapeNet chair.}
  \label{priors}
  \resizebox{0.8\columnwidth}{!}{%
  \Huge{
  \begin{tabular}{c|ccc||c}
    \toprule
       & Ours-RIM(lv1) & Ours-RIM(lv3)&Ours-BAE & DIF\\
    \midrule
    chair &70.1 & 78.3 &80.7&67.2\\
    \bottomrule
  \end{tabular}
  }}
\end{table}

\begin{figure}[t!]
\centering
            \includegraphics[width=0.4\textwidth]{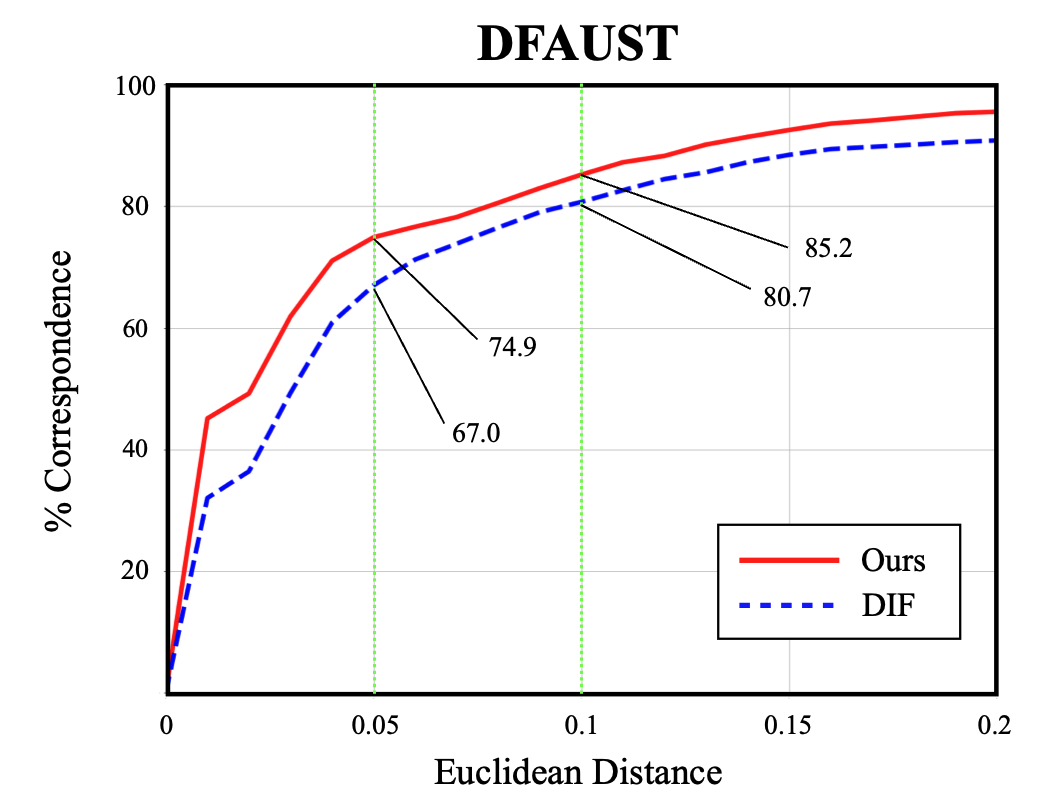}
        \caption
         {\textbf{Keypoint transfer performance for DFAUST~\cite{dfaust:CVPR:2017}.} We measure correspondence accuracy (PCK score).}
\label{fig:dfaust-table}
\end{figure}

\section{Qualitative results on ShapeNet}
\label{suppl_s4}

We provide additional visualizations as in Figure~\ref{fig:snkp}, Figure~\ref{fig:snpl}, and~\ref{fig:sntt} to show keypoint/part label/texture transfer results on ShapeNetV2~\cite{chang2015shapenet}. 
These results illustrate the importance of imposing semantics in implicit template learning for generic object shapes.

\section{Keypoint transfer on ScanObjectNN}
\label{suppl_s5}

In Figure~\ref{fig:sonn}, we conduct qualitative analysis via keypoint transfer task on ScanObjectNN~\cite{uy-scanobjectnn-iccv19} to validate the robustness of our method even with the domain gap between synthetic data and real-world data.
We transfer the keypoint labels of chair/table categories in ShapeNetV2~\cite{chang2015shapenet} to corresponding shapes in ScanObjectNN~\cite{uy-scanobjectnn-iccv19}.
Since shapes in ScanObjectNN are real-world scanned data, \ie, they are not always watertight, we only use scanned surface points for inference.
Our framework shows superior performance over baseline models, supporting the importance of understanding semantics for shape correspondence.
Visualizations of the chair in Figure~\ref{fig:sonn} are clear examples.
Two keypoints of pea-green and orange are geometrically close but semantically different in the source chair. 
After transferring keypoints, our framework is the only model that enables disentangling two keypoints in the target chair, where arms and legs are separated, unlike the source chair.
\begin{figure}[t!] 
\centering
\includegraphics[width=0.36\textwidth]{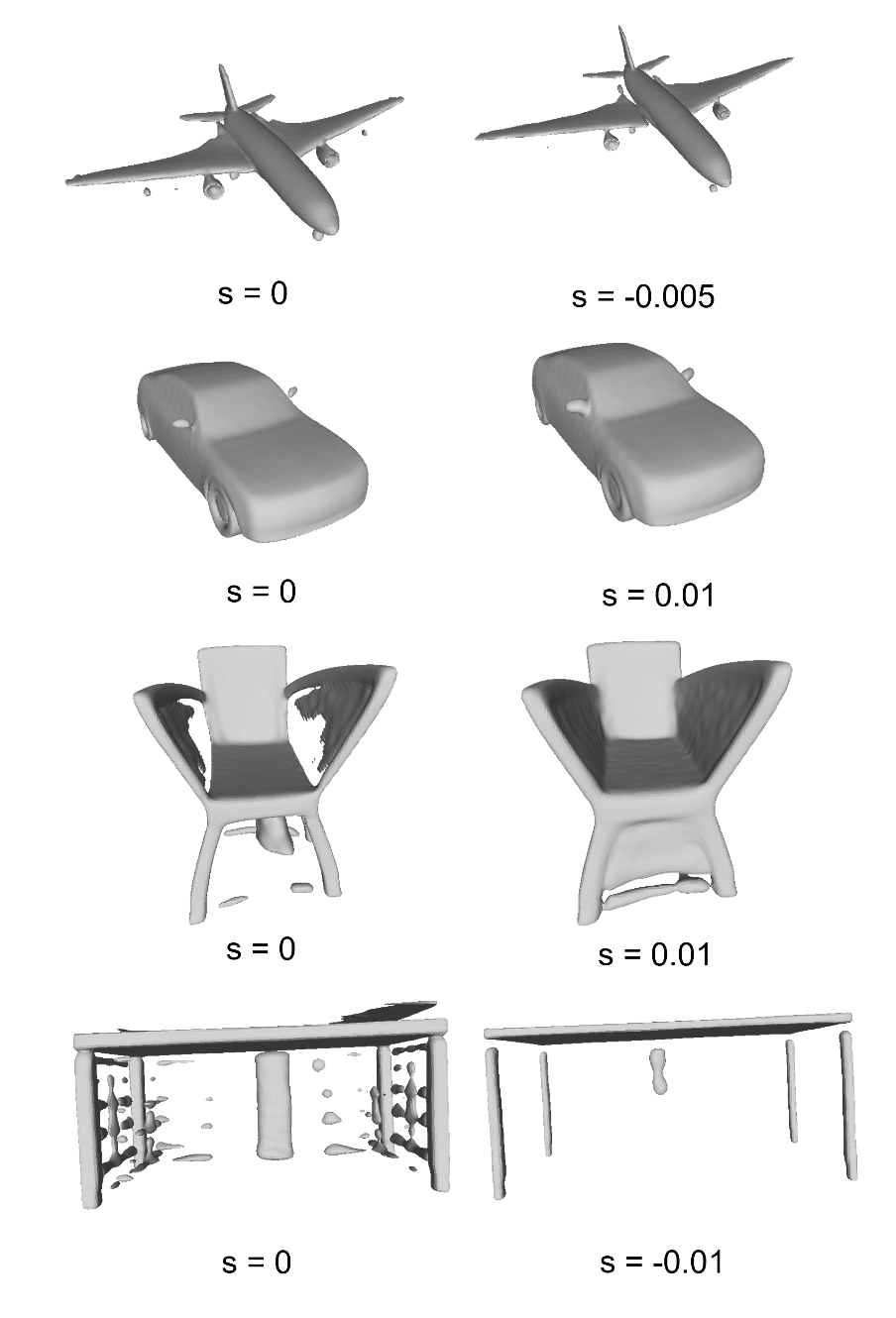}
\caption{\textbf{Visualization of learned implicit template fields for four categories by extracting different iso-surfaces.}}
\label{fig:tmpl}

\end{figure}

\section{Keypoint transfer on DFAUST}
\label{suppl_s6}

We provide detailed keypoint transfer results to evaluate unsupervised correspondence performance on non-rigid shapes~\cite{dfaust:CVPR:2017} in Figure~\ref{fig:dfaust-table} and Figure~\ref{fig:dfkp}. 
For evaluation, we assume that we do not know the ground truth correspondence between human shapes.
Although learning a suitable global template for various actions with large local deformation scales is a challenging task, our method consistently shows better and more consistent correspondence accuracy compared to the baseline, as shown in the following figures.
In particular, we present the full keypoint transfer result in Figure~\ref{fig:dfaust-table}, where we use source shape/selected keypoints as in the shape in Figure~\ref{fig:dfkp} (leftmost shape in beige).
The performance is PCK scores given a continuous range of thresholds.
We observe our framework (red) consistently outperforms the baseline (blue), indicating that learned correspondences from our framework are more accurate.
Visualizations in Figure~\ref{fig:dfkp} also support our framework, \eg, our model transfers “hand” keypoints (highlighted in blue balls) while the baseline transfers them to the waist or the elbow.

\section{Analysis of learned implicit template fields}
\label{suppl_s7}
Figure~\ref{fig:tmpl} illustrates the iso-surfaces of four categories (\eg, airplane, car, chair, table) extracted from learned implicit template fields.
Our framework learns a template by imposing semantically consistent mapping, rather than learning realistic templates.
We observe that the iso-surface of the learned templates sometimes has stretched parts in more challenging categories, which is beneficial for semantically mapping shapes with high variability.
Thus, it leads to improved performance on dense correspondence.

\section{Limitations}
Since we utilize self-supervised segmentation models for knowledge distillation, the performance can be highly dependent on the part segmentation quality of the feature extractor.
However, we have demonstrated that our framework consistently improves correspondence performance with various semantic priors, even with coarse part semantics such as BAE-Net~\cite{chen2019bae_net} with $k=2$ or level 1 RIM-Net~\cite{niu2022rim}.
Note that, unlike 2D domain, there are no powerful self-supervised segmentation models available yet, such as DINO~\cite{caron2021emerging}. 
If such models emerge for 3D domain, our framework can potentially achieve even better performance. 
These are left for future works.

\begin{figure*}[t!]
\centering
            \includegraphics[width=0.75\textwidth]{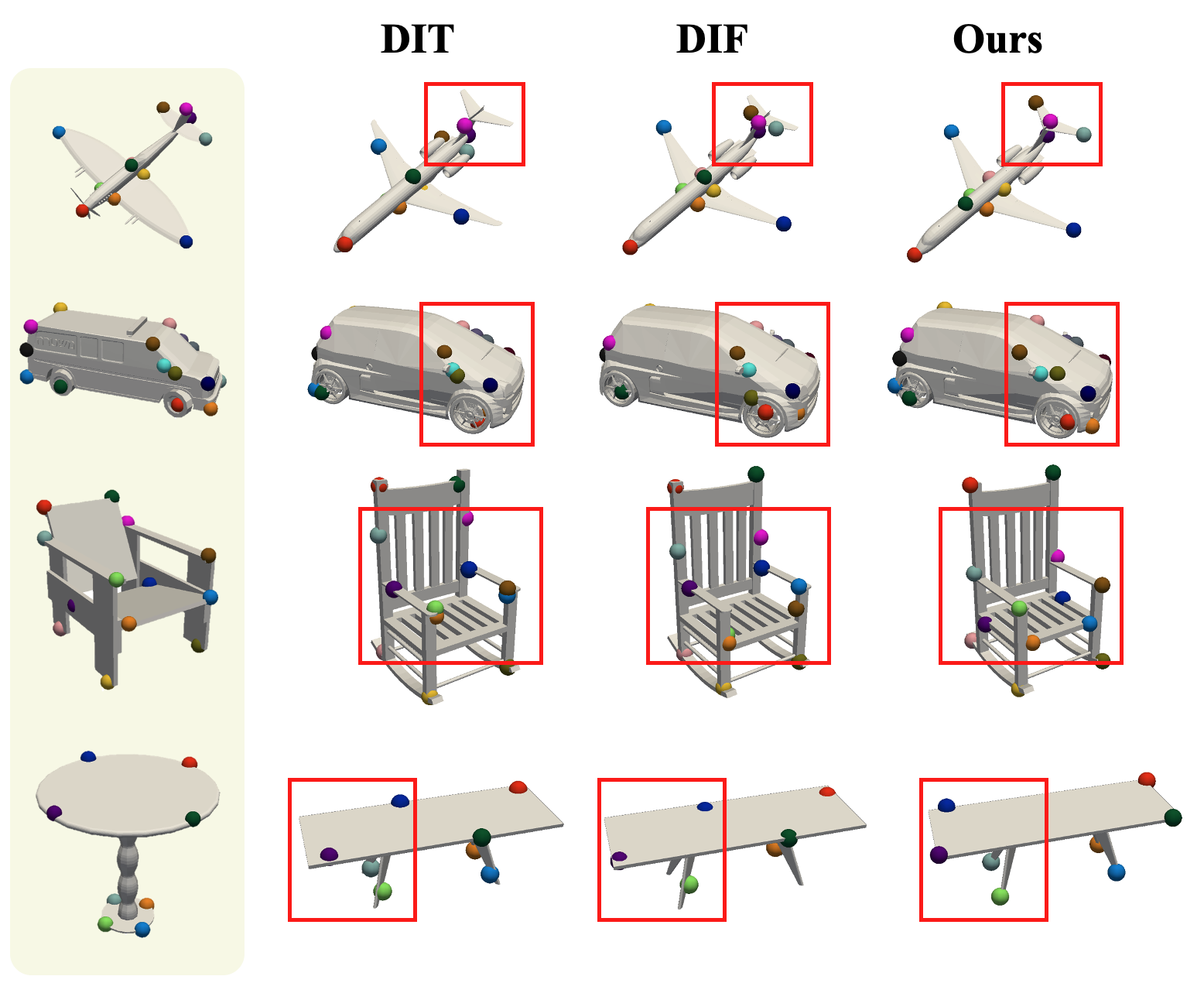}
        \caption
         {\textbf{Additional comparison on keypoint transfer in ShapeNetV2.}}
\label{fig:snkp}
\vspace{-10pt}
\end{figure*}
\begin{figure*}[t!]
\centering
            \includegraphics[width=0.76\textwidth]{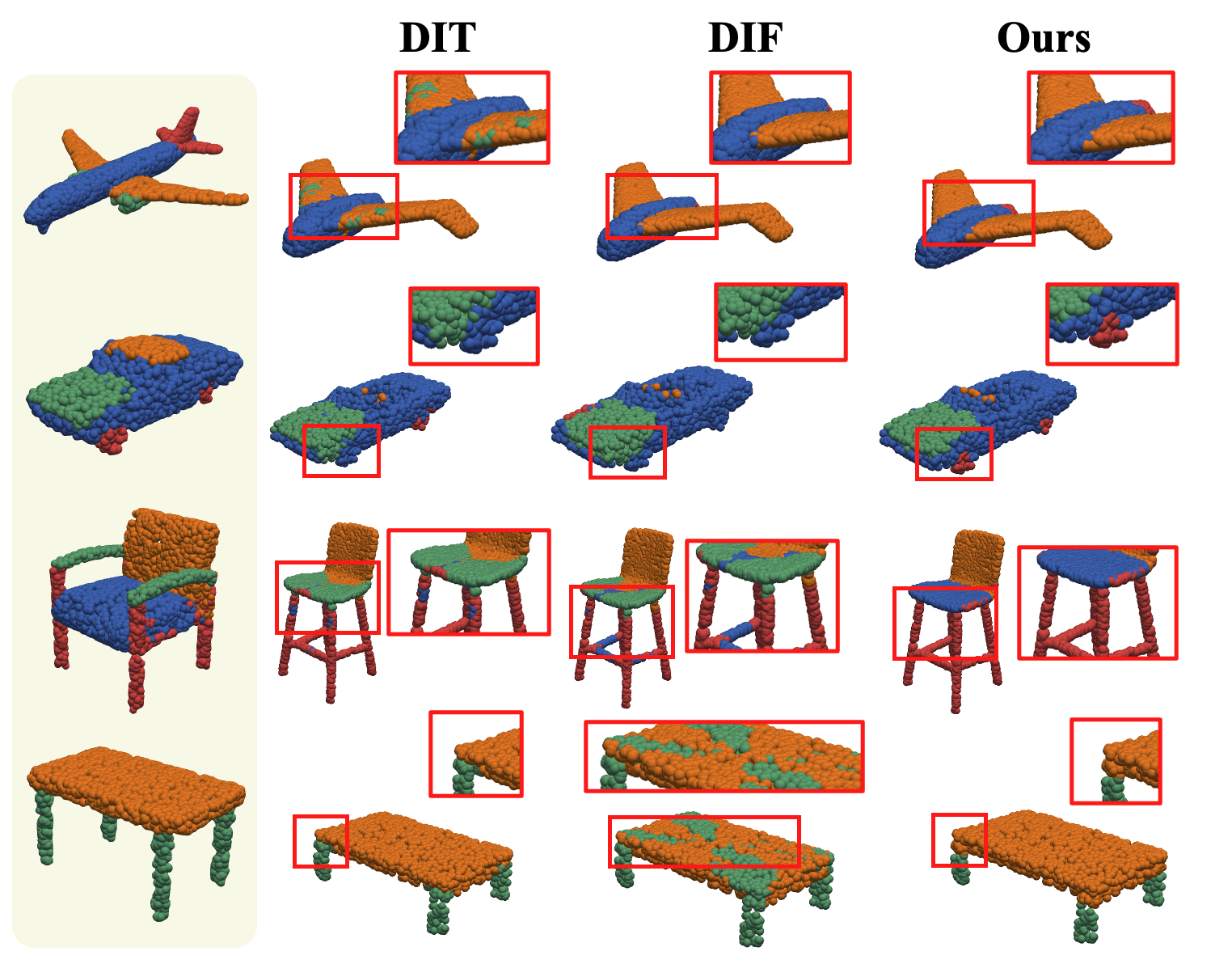}
        \caption
         {\textbf{Additional comparison on part label transfer ShapeNetV2.}}
\label{fig:snpl}
\vspace{-10pt}
\end{figure*}

\begin{figure*}[t!]
\centering
            \includegraphics[width=0.98\textwidth]{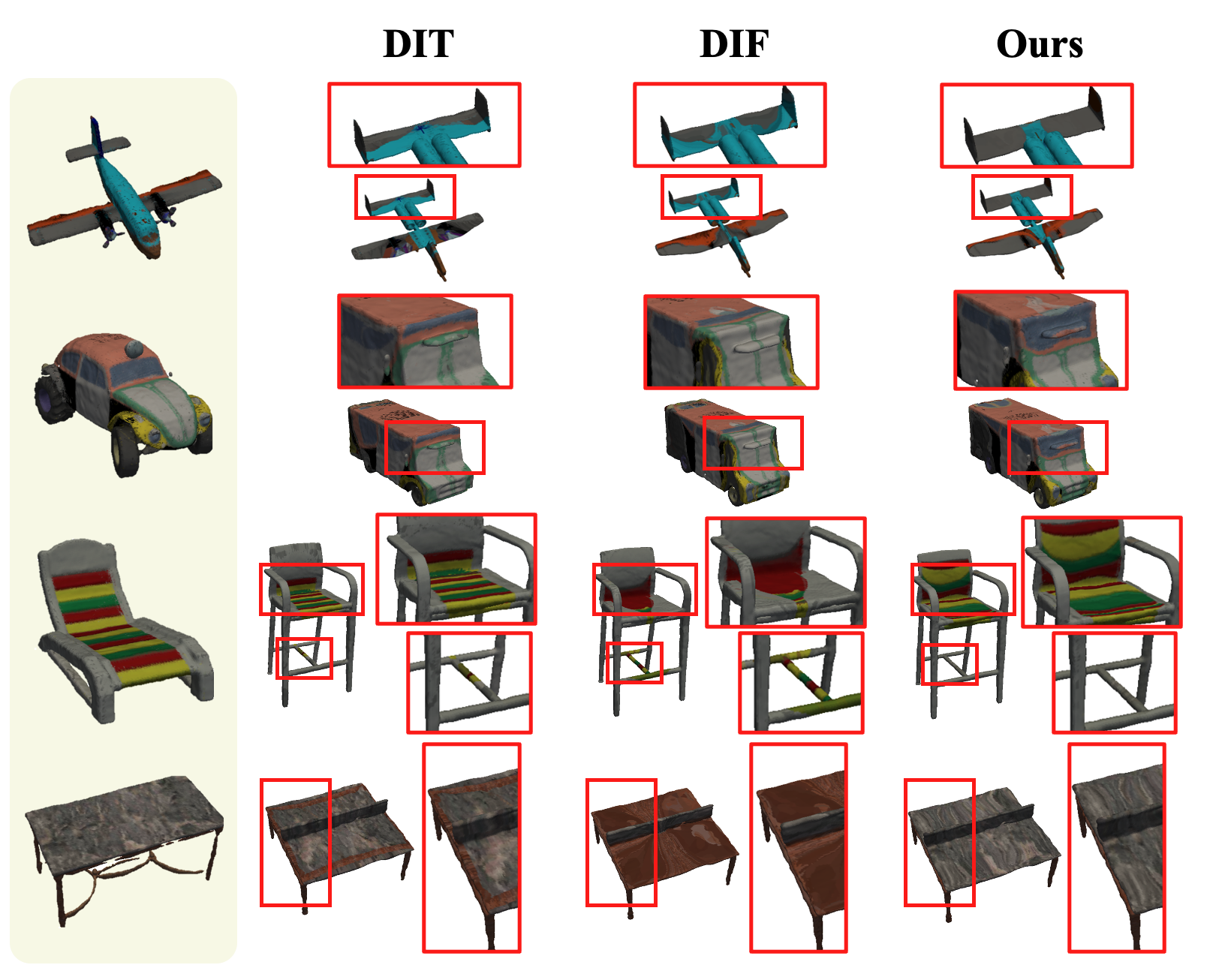}
        \caption
         {\textbf{Additional comparison on texture transfer ShapeNetV2.}}
\label{fig:sntt}
\end{figure*}

\begin{figure*}[t!]
\centering
            \includegraphics[width=0.88\textwidth]{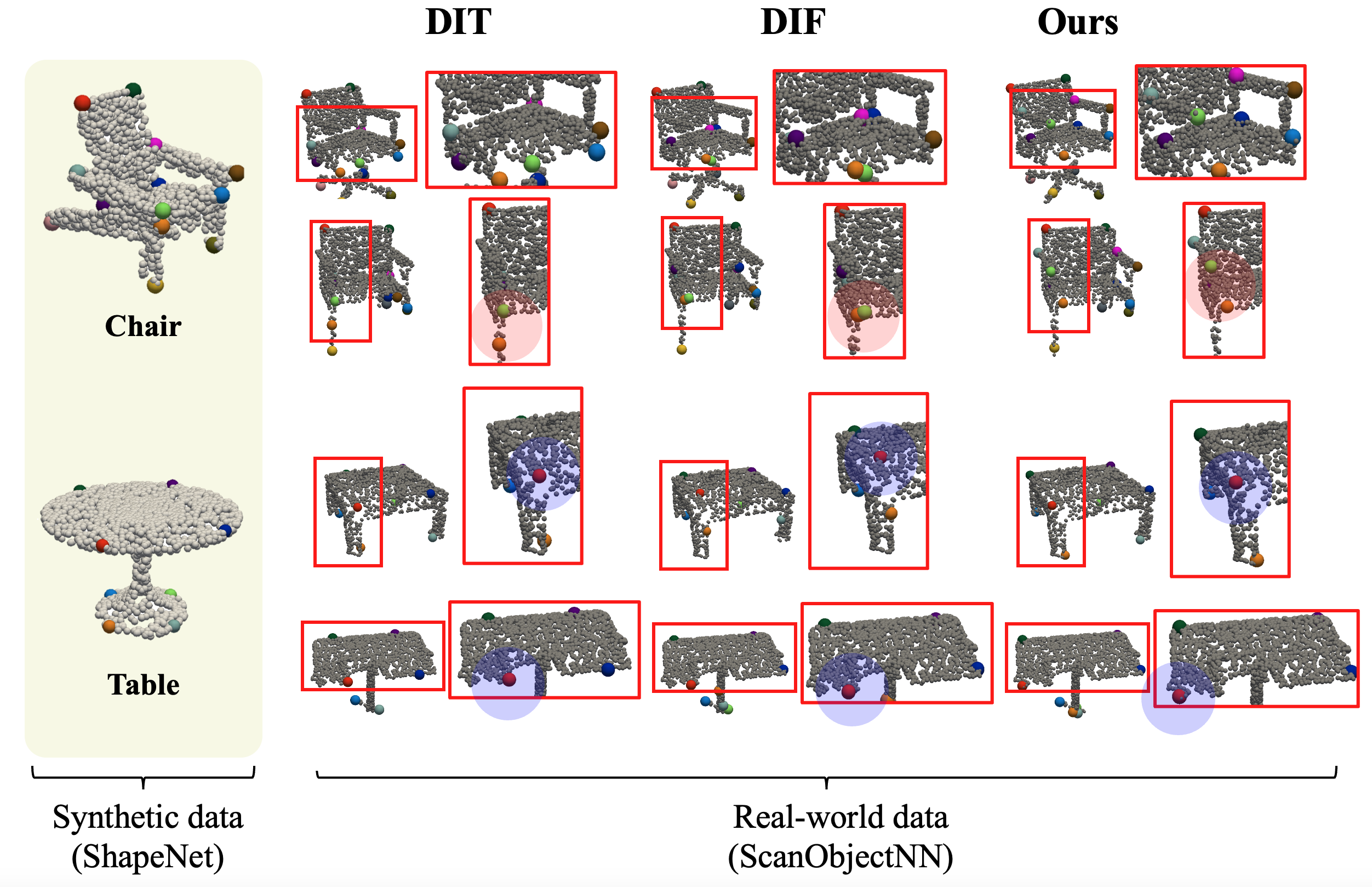}
        \caption
         {\textbf{Comparison on keypoint transfer in ScanObjectNN~\cite{uy-scanobjectnn-iccv19}.} We transfer keypoint from synthetic data (ShapeNetV2) to real-world scanned data (ScanObjectNN) to validate the robustness towards the domain gap (zoom-in for better visualization).}
\label{fig:sonn}
\vspace{-10pt}
\end{figure*}

\begin{figure*}[t!]
\centering
            \includegraphics[width=0.8\textwidth]{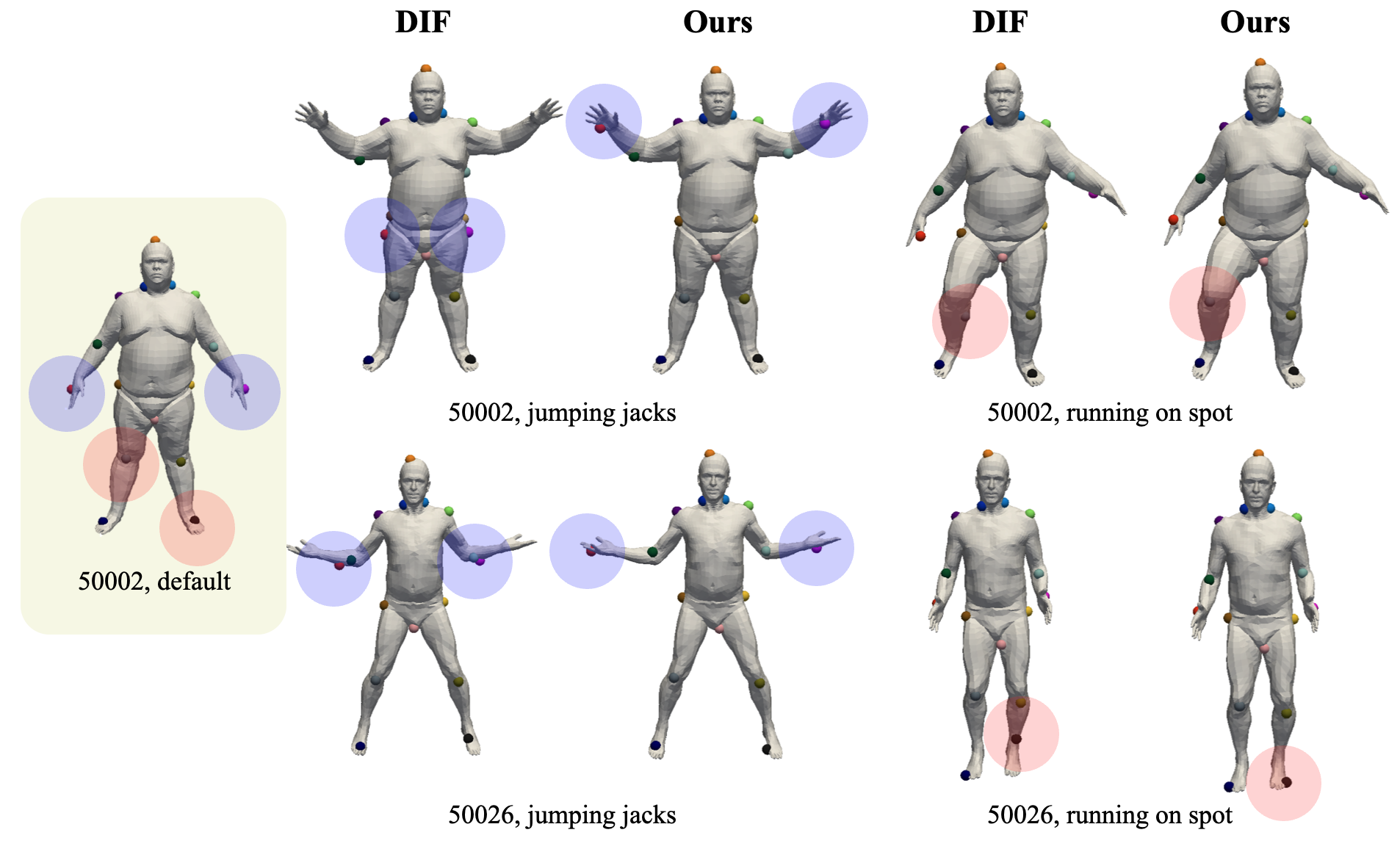}
        \caption
         {\textbf{Comparison on keypoint transfer in DFAUST~\cite{dfaust:CVPR:2017}.} Blue balls indicate keypoints for both hands, while red balls indicate keypoints for the right knee and the left foot (zoom-in for better visualization).}
\label{fig:dfkp}
\end{figure*}

\end{document}